%% file: main.tex
\documentclass[11pt]{article}

\usepackage[margin=1in]{geometry}
\usepackage[T1]{fontenc}
\usepackage[utf8]{inputenc}
\usepackage{lmodern}
\usepackage[hyphens]{url}
\usepackage{graphicx}
\usepackage{natbib}
\usepackage{caption}
\usepackage{booktabs}
\usepackage{amsfonts}
\usepackage{nicefrac}
\usepackage{microtype}
\usepackage{xcolor}
\usepackage{hyperref}
\hypersetup{
    colorlinks=true,
    citecolor=blue,
    linkcolor=red,
    urlcolor=blue
}
\usepackage{titling}
\thanksmarkseries{arabic}

\usepackage{amsmath}
\usepackage{amssymb,mathtools,amsthm}
\usepackage{bm}
\usepackage{algorithm}
\usepackage{algpseudocode}
\input{math_commands.tex}

\newtheorem{theorem}{Theorem}
\newtheorem{proposition}{Proposition}
\newtheorem{lemma}{Lemma}
\newtheorem{corollary}{Corollary}

\def \TS {T_{\text{stop}}}

\title{Test-Time Scaling via Budgeted Multi-Attribute Verification}

\author{
  Bo Xue\thanks{City University of Hong Kong.
  Email: \texttt{boxue4-c@my.cityu.edu.hk}}
  \quad
  Ji Cheng\thanks{City University of Hong Kong.
  Email: \texttt{J.Cheng@my.cityu.edu.hk}}
  \quad
  Shen-Huan Lyu\thanks{Hohai University.
  Email: \texttt{lvsh@hhu.edu.cn}}
  \quad
  Yuanyu Wan\thanks{Zhejiang University.
  Email: \texttt{wanyy@zju.edu.hk}}
  \quad
  Shuang Qiu{\color{blue}${}^\dag$}\thanks{City University of Hong Kong.
  Email: \texttt{shuanqiu@cityu.edu.hk}}
}

\usepackage[symbol]{footmisc}

\begin{document}

\date{\today}

\footnotetext[2]{Corresponding Author}

\maketitle

\begin{abstract}
Verifying LLM-generated answers under a shared computational budget requires jointly deciding which candidates to inspect and which verification attributes to evaluate. We formulate this problem as multi-attribute good-arm identification under a global budget: each candidate is an arm evaluated along several costly attributes, and the goal is to certify as many candidates as possible whose mean scores exceed the prescribed thresholds on all attributes. We propose \textsc{BMA-GAI}, an algorithm that combines cost-aware arm selection with adaptive sampling of attributes. Every observation serves both to guide adaptive allocation and to support anytime-valid certification, which removes the need for a separate confirmation stage. We establish an asymptotic coverage guarantee for \textsc{BMA-GAI} and derive a matching information-theoretic converse that characterizes the intrinsic complexity of the problem, thereby proving that \textsc{BMA-GAI} is first-order optimal away from critical budget levels. Experiments on synthetic benchmarks and an LLM answer-verification task show that \textsc{BMA-GAI} allocates the verification budget more efficiently and certifies more high-quality candidates than competing methods.

\end{abstract}

\section{Introduction}
\label{sec:intro}

Test-time scaling has emerged as an effective approach to improving the performance of large language models (LLMs) by allocating additional computation at inference time. Existing methods scale test-time computation through repeated sampling and aggregation \citep{wang2023selfconsistency,brown2024largelanguagemonkeys}, search over alternative reasoning trajectories \citep{yao2023tree}, iterative refinement with feedback \citep{madaan2023selfrefine,shinn2023reflexion}, and verifier-guided selection among candidate solutions \citep{cobbe2021training,uesato2022solving,lightman2024verify}. Recent studies further show that reasoning performance can be improved by explicitly controlling test-time computation \citep{snell2025scaling,muennighoff2025s1}, and that the marginal benefit of additional compute depends strongly on both input difficulty and inference strategy \citep{snell2025scaling}. These observations have motivated adaptive allocation across queries, either by predicting the marginal value of additional computation \citep{damani2025learning} or by casting cross-query allocation as a bandit problem \citep{zuo2026strategic,wang2025dynscaling}.

Verification plays a central role in this allocation problem. In candidate-based reasoning, additional generations are useful only if promising responses can be reliably distinguished from incorrect ones \citep{cobbe2021training,lightman2024verify}. Recent work has therefore explored multiple specialized verifiers \citep{lifshitz2025multiagent} and adaptive allocation of verification computation \citep{zhong2026solvedetectverify,singhi2025whentosolve}. These developments suggest viewing verification as a test-time resource whose allocation directly determines how many candidate answers can be reliably certified.

In many applications, however, the validity of an answer cannot be adequately summarized by a single verification score. Different verifiers may assess complementary requirements, such as numerical consistency, logical validity, and other task-specific criteria \citep{lifshitz2025multiagent,kwok2026llmasverifier}. Moreover, verifier calls can differ substantially in both cost and statistical difficulty, which makes the allocation of verification effort nontrivial in its own right \citep{kanarios2024cost}. This heterogeneity enables cost-aware screening: inexpensive verifiers can provide early evidence that a candidate is unlikely to satisfy all criteria, so that expensive verification can be reserved for candidates that remain plausible. At the same time, certification is conjunctive: a candidate can be accepted only after sufficient evidence has been accumulated for \emph{every} required attribute, as in multi-attribute constrained identification \citep{dharod2024grouped}. Under a shared verification budget, the learner must therefore decide both \emph{which candidate} to verify and \emph{which attribute} to inspect next. This leads to our central question: \emph{how should a limited verification budget be allocated across candidates and verification attributes to reliably certify as many candidates as possible?}

We formalize this setting as \emph{multi-attribute good-arm identification under a global verification budget}. Each fixed query--answer pair is modeled as an arm with multiple separately sampleable verification attributes, where evaluating an attribute incurs a known cost and yields a noisy score. An arm is certified only if all of its attributes meet their prescribed thresholds, and the learner aims to certify as many arms as possible within a shared hard budget while controlling the probability of any erroneous certification. This formulation builds on three lines of work: good-arm identification, which seeks arms whose means exceed a prescribed threshold \citep{kano2019goodarm}; multi-attribute constrained bandits with separately observable feasibility criteria \citep{dharod2024grouped}; and cost-aware pure exploration with heterogeneous sampling costs \citep{kanarios2024cost}. In contrast to these settings, we study a budgeted \emph{coverage} objective: the learner allocates verification effort under a shared budget so as to maximize the number of reliably certified arms. The achievable coverage is therefore not fixed in advance, but is jointly determined by the budget and the arms' certification costs.

% In contrast to these settings, we study a budgeted coverage objective: under a shared hard verification budget, the learner must determine which good arms are worth certifying and how to allocate verification effort across their attributes so as to maximize the number of reliably certified arms. Consequently, the achievable coverage is not fixed in advance, but is jointly determined by the available budget and the instance-dependent certification costs of the candidate arms.

% In contrast to these settings, our goal is neither to identify a single preferred arm nor to attain a prespecified number of discoveries. Instead, the number of arms that can be certified is determined adaptively by the available verification budget and the unknown certification difficulty of each arm.

To answer this question, we propose \textsc{BMA-GAI}, an adaptive allocation algorithm that simultaneously learns which candidates are inexpensive to certify and how to distribute verification effort across their unresolved attributes. Our main contributions are summarized as follows:
\begin{itemize}
\item We formulate \emph{multi-attribute good-arm identification under a global verification budget}, in which each candidate must satisfy multiple separately observable criteria with heterogeneous evaluation costs, and the goal is to maximize reliable certification coverage.

\item We develop \textsc{BMA-GAI}, which adaptively allocates verification effort across candidates and their unresolved attributes via cost-aware selection, and we establish an achievable coverage guarantee showing that it reliably certifies as many good candidates as the available budget permits.

\item We characterize the fundamental complexity of reliable coverage: certifying $\ell$ good arms requires an aggregate complexity of $T_\ell^*(\mu)$ (up to a $\log(1/\delta)$ factor), which is determined by the $\ell$ least costly good arms. A matching information-theoretic converse shows that \textsc{BMA-GAI} attains this frontier to first order away from critical budget thresholds.

% \item We characterize the fundamental complexity of reliable coverage and derive a matching information-theoretic converse. The resulting characterization shows that optimal coverage is determined by the aggregate certification complexities of the least costly good arms and establishes the first-order optimality of \textsc{BMA-GAI} away from critical budget thresholds.

\item We evaluate \textsc{BMA-GAI} on synthetic benchmarks and an LLM answer-verification task, demonstrating the benefits of adaptively allocating verification effort across candidates and heterogeneous verification attributes under limited budgets.

\end{itemize}

\section{Related Work}
\label{sec:related-work}

\paragraph{Test-Time Scaling for LLMs.}
Test-time scaling has become a prominent paradigm for improving LLM reasoning without modifying model parameters. One class of methods increases inference-time computation by generating multiple reasoning trajectories and then aggregating or selecting among them. Representative approaches include self-consistency and its adaptive or voting-based variants \citep{wang2023selfconsistency,chen2023universal, aggarwal2023adaptiveconsistency,xue2023dynamicvoting}, as well as repeated-sampling and Best-of-$N$ strategies that select promising responses from larger candidate pools \citep{brown2024largelanguagemonkeys,snell2025scaling,wang2025stbon}.
A second line of work spends additional computation on searching over or iteratively refining reasoning trajectories, including Tree-of-Thoughts-style search \citep{yao2023tree}, self-refinement \citep{madaan2023selfrefine}, and feedback-driven iterative reasoning \citep{shinn2023reflexion}.
A third class devotes inference-time computation to evaluating generated candidates, using verification signals to assess complete solutions or intermediate reasoning steps \citep{cobbe2021training,uesato2022solving,lightman2024verify}.
More broadly, recent studies show that the benefit of additional test-time computation depends strongly on the task, input difficulty, and inference strategy \citep{snell2025scaling,muennighoff2025s1}, motivating a shift from uniformly increasing inference compute toward allocating it selectively.

Recent work has increasingly explored \emph{adaptive} test-time scaling, including input-adaptive allocation of decoding computation \citep{damani2025learning}, bandit-based budget allocation across queries \citep{zuo2026strategic,wang2025dynscaling}, and adaptive verification through multiple verifiers or dynamic allocation of verification effort \citep{lifshitz2025multiagent,zhong2026solvedetectverify,singhi2025whentosolve}. In contrast, we study how to allocate a verification budget across a fixed set of candidate answers and multiple verification attributes, with the objective of maximizing reliable certification coverage.

\paragraph{Good-Arm Identification and Cost-Aware Pure Exploration.}
A broad literature on bandit pure exploration studies threshold-based identification \citep{locatelli2016thresholding,kano2019goodarm,cho2025reward,jiang2025multithreshold},
problems with multiple correct answers \citep{Degenne:2019},
structured feasibility constraints \citep{dharod2024grouped,lardy2025constrained,cai2026feasibility,mukherjee2026fixedbudget},
and heterogeneous sampling costs or resource constraints \citep{kanarios2024cost,li2024resource}.
Thresholding bandits classify arms according to whether their mean rewards exceed a prescribed threshold \citep{locatelli2016thresholding}, whereas good-arm identification (GAI) aims to sequentially output satisfactory arms as soon as they can be certified \citep{kano2019goodarm}. Multiple-answer pure exploration allows several outputs to be simultaneously correct and studies how sampling should be allocated among competing valid answers \citep{Degenne:2019}. More recent work extends these ideas to anytime-valid and nonparametric GAI \citep{cho2025reward} and to settings with multiple threshold constraints \citep{jiang2025multithreshold}.

A complementary line of work incorporates explicit resource constraints and heterogeneous sampling costs into pure exploration \citep{kanarios2024cost,li2024resource,dharod2024grouped,lardy2025constrained,cai2026feasibility,mukherjee2026fixedbudget}.
Cost-aware best-arm identification accounts for arm-dependent sampling costs \citep{kanarios2024cost}, while resource-constrained formulations impose global limits on the resources consumed during identification \citep{li2024resource}. Closest to our feedback model, constrained grouped bandits consider arms composed of separately observable attributes subject to conjunctive feasibility requirements \citep{dharod2024grouped}, and subsequent work studies related constrained and fixed-budget variants \citep{lardy2025constrained,cai2026feasibility,mukherjee2026fixedbudget}. Our setting combines these two lines of research but pursues a different goal: certifying as many arms as possible under a shared hard budget, where each certified arm must satisfy all prescribed attribute-wise thresholds.

\section{Preliminaries}
\label{sec:prelim}
We formulate test-time answer verification as a \emph{multi-attribute bandit problem} under a global verification budget $B>0$. There are $Q$ candidate arms indexed by $[Q]\coloneqq\{1,\ldots,Q\}$. In the LLM setting, each arm $q$ corresponds to a fixed query--answer pair, and $a_q$ denotes the associated candidate answer. Each arm has $M$ separately sampleable attributes indexed by $[M]\coloneqq\{1,\ldots,M\}$, and sampling attribute $m$ of arm $q$ amounts to pulling the arm--attribute pair $(q,m)$. Each attribute corresponds to a fixed verification procedure (a verifier) that assesses a particular aspect of answer validity. A fresh invocation of verifier $m$ on candidate answer $a_q$ incurs a known cost $c_m>0$ and returns a stochastic verification score with unknown mean $\mu_{q,m}\in[0,1]$, normalized so that larger values indicate stronger evidence of validity. Each attribute $m$ is associated with a prescribed acceptance threshold $\xi_m\in(0,1)$.

\paragraph{Budgeted bandit interaction.}
At each round $t$, the learner adaptively selects an arm--attribute pair $(q_t,m_t)\in[Q]\times[M]$ and observes a stochastic verification score $Y_t$. Let $\mathcal F_{t-1}$ denote the history available before round $t$, including the learner's internal randomization, so that $(q_t,m_t)$ is $\mathcal F_{t-1}$-measurable. We assume that each new observation satisfies
\begin{equation}
\begin{aligned}
\mathbb E[Y_t\mid\mathcal F_{t-1}]
=\mu_{q_t,m_t},
\quad
\mathbb E\!\left[
\exp\!\left(\lambda(Y_t-\mu_{q_t,m_t})\right)
\,\middle|\,\mathcal F_{t-1}
\right]
\le \exp\!\left(\frac{\lambda^2}{2}\right),
\quad \forall \lambda\in\mathbb R.
\end{aligned}
\label{eq:subgaussian-verifier}
\end{equation}
Thus, repeated pulls of the same arm--attribute pair yield fresh verifier observations, and the learner may adapt its sampling decisions to all previously observed evidence. Since each pull of attribute $m$ incurs cost $c_m$, the cumulative cost after $t$ observations is
\begin{equation*}
C_t
=
\sum_{s=1}^{t} c_{m_s}.
\end{equation*}
The learner is given a global verification budget $B>0$. Because verification costs may vary across attributes, the budget constrains the total incurred cost rather than merely the number of verifier calls. The learner must therefore decide adaptively both which arm--attribute pairs to sample and how to distribute the available budget among them.

\paragraph{Good arms.}
An arm is deemed good if all of its attribute means meet their respective acceptance thresholds. Accordingly, the set of good arms is
\begin{equation*}
\mathcal G
=
\left\{
q\in[Q]:
\mu_{q,m}\ge\xi_m,\ \forall m\in[M]
\right\}.
\end{equation*}
In LLM verification, $q\in\mathcal G$ means that the candidate answer $a_q$ satisfies all verification criteria. This definition induces a fundamental asymmetry: violating a single threshold suffices to disqualify an arm, whereas certification requires evidence that \emph{all} thresholds are met. Consequently, the verification budget should be allocated adaptively, rather than uniformly, across candidate answers and attributes. To characterize both the algorithmic guarantee and the intrinsic problem complexity, we assume $\mu_{q,m}\neq\xi_m$ for all $(q,m)\in[Q]\times[M]$.

\paragraph{Budgeted reliable coverage.}
For a prescribed confidence level $\delta\in(0,1)$, a policy $\pi$ consists of an adaptive sampling rule, a stopping time $\TS$ taking nonnegative integer values, and an $\mathcal F_{\TS}$-measurable output set $\widehat{\mathcal S}_{\pi}(B)\subseteq[Q]$. The policy may depend on the budget $B$, the confidence level $\delta$, and the known costs and thresholds. Our objective is to maximize the expected number of returned arms subject to family-wise reliability and a hard verification budget:
\begin{equation*}
\begin{aligned}
\sup_{\pi}\,
\mathbb E_{\boldsymbol\mu,\pi}
\left[
\left|\widehat{\mathcal S}_{\pi}(B)\right|
\right]
\qquad
\text{s.t.}\,
\mathbb P_{\boldsymbol\mu,\pi}
\left(\widehat{\mathcal S}_{\pi}(B)\subseteq\mathcal G\right)\ge 1-\delta,
\quad
C_{\TS}\le B.
\end{aligned}
\end{equation*}
Here, $\boldsymbol\mu=(\mu_{q,m}:(q,m)\in[Q]\times[M])$ denotes the matrix of unknown means. The objective is thus to maximize reliable coverage under a fixed cumulative-cost budget. Since the achievable coverage depends on both the budget and the difficulty of the underlying instance, effective verification calls for adaptive allocation across arms and attributes.

\section{Algorithm}
\label{sec:track-algorithm}

We now present \textsc{BMA-GAI}, an adaptive allocation algorithm for multi-attribute good-arm identification under a global verification budget. The algorithm maintains a set of certified arms and adaptively allocates verifier calls among the remaining arms. Each observation serves both to guide subsequent allocation and to accumulate evidence for certification, and the algorithm terminates once all arms are certified or no further verifier call is affordable. The full procedure is given in Algorithm~\ref{alg:bgai-track}.

Let $D=QM$. For each $t=0,1,2,\ldots$, let $\mathcal S_t$, $\mathcal Q_t$, $\mathcal M_t$, and $C_t$ denote, respectively, the set of certified arms, the set of active arms, the set of currently affordable attributes, and the cumulative verification cost after $t$ observations. For each arm--attribute pair $(q,m)$, let $N_{q,m}(t)$, $\widehat\mu_{q,m}(t)$, and $L_{q,m}(t)$ denote its sample count, empirical mean, and lower certificate. Time indices are suppressed in Algorithm~\ref{alg:bgai-track} for notational simplicity. \textsc{BMA-GAI} initializes $\mathcal S_0=\varnothing$, $\mathcal Q_0=[Q]$, $C_0=0$, and $\mathcal M_0=\{m\in[M]:C_0+c_m\le B\}$, and sets $N_{q,m}(0)=\widehat\mu_{q,m}(0)=L_{q,m}(0)=0$ for each $(q,m)$.

To describe the sampling rule, we first specify the certification state maintained by the algorithm. For each pair $(q,m)$, the lower certificate after $t$ observations is
\begin{equation}
L_{q,m}(t)
=\max\left\{
L_{q,m}(t-1),
\widehat\mu_{q,m}(t)
-
\sqrt{
\frac{2}{N_{q,m}(t)}
\log\frac{4D N_{q,m}^2(t)}{\delta}
}
\right\},
\quad
L_{q,m}(0)=0,
\label{eq:track-certificate}
\end{equation}
whenever $N_{q,m}(t)>0$; otherwise, it remains unchanged. Based on these certificates, the set of unresolved attributes of arm $q$ is
\begin{equation*}
\mathcal U_q(t)
=
\{m\in[M]:L_{q,m}(t)<\xi_m\}.
\end{equation*}
Arm $q$ is certified once $\mathcal U_q(t)=\varnothing$, i.e., once the lower certificate of every attribute reaches its threshold. Certified arms are then removed from the active set, while all remaining arms continue to participate in sampling.

\begin{algorithm}[tb]
\caption{\textsc{BMA-GAI}: Budgeted Multi-Attribute Good-Arm Identification}
\label{alg:bgai-track}
\begin{algorithmic}[1]
\Require Arms $\mathcal A=\{a_q\}_{q=1}^Q$; thresholds $\{\xi_m\}_{m=1}^M$; costs $\{c_m\}_{m=1}^M$; budget $B>0$; $\delta\in(0,1/2)$
\State Initialize $\mathcal S\gets\varnothing$, $\mathcal Q\gets[Q]$, $t\gets0$, $C\gets0$, $\mathcal M\gets\{m\in[M]:C+c_m\le B\}$
\State Initialize $(N_{q,m},\widehat\mu_{q,m},L_{q,m})\gets(0,0,0)$,
$\forall(q,m)\in[Q]\times[M]$
\While{$\mathcal Q\neq\varnothing$ \textbf{and} $\mathcal M\neq\varnothing$}
    \If{Eq.~(\ref{eq:track-exploration}) holds}
        \State Choose
        $
        (q,m)\in
        \arg\min_{q\in\mathcal Q,\;m\in\mathcal M}
        N_{q,m}
        $
    \Else
        \State Compute $\ell,u,H^-,H^+$ 
        using Eqs.~(\ref{eq:track-allocation-interval})--(\ref{eq:track-cost-interval})
        \If{$\min_{q\in\mathcal Q}H_q^+=\infty$}
            \State Choose
            $
            (q,m)\in
            \arg\min_{q\in\mathcal Q,\;m\in\mathcal M}
            N_{q,m}
            $
        \Else
            \State Choose $q$
            using Eq.~(\ref{eq:track-stable-target}) and $m$ using Eq.~(\ref{eq:track-attribute})
        \EndIf
    \EndIf
    \State Obtain one fresh observation $Y$ from $(q,m)$
    \State $C\gets C+c_m$, $t\gets t+1$
    \State Update $N_{q,m}$, $\widehat\mu_{q,m}$, and $L_{q,m}$
    using Eq.~(\ref{eq:track-certificate})
    \State $\mathcal C
    \gets
    \{q\in\mathcal Q:
    L_{q,m}\ge\xi_m,\ \forall m\in[M]\}$
    \State $\mathcal S\gets\mathcal S\cup\mathcal C$,
    $\mathcal Q\gets\mathcal Q\setminus\mathcal C$
    \State $\mathcal M\gets\{m\in[M]:C+c_m\le B\}$
\EndWhile
\State \Return $\widehat{\mathcal S}=\mathcal S$; abstain on all remaining arms
\end{algorithmic}
\end{algorithm}

\paragraph{Sampling decision.}

Given the current certification state, \textsc{BMA-GAI} selects the next arm--attribute pair $(q_{t+1},m_{t+1})$ from the active arms and currently affordable attributes, which includes three cases:

\emph{(i) Sparse exploration.}
The algorithm first checks whether some currently available arm--attribute pair is under-explored. Specifically, if
\begin{equation}
\min_{q\in\mathcal Q_t,\,m\in\mathcal M_t}
N_{q,m}(t)
<
\frac{\sqrt{t+1}}{D},
\label{eq:track-exploration}
\end{equation}
it selects a least-sampled pair in $\mathcal Q_t\times\mathcal M_t$ as $(q_{t+1},m_{t+1})$. This sparse exploration prevents the allocation rule from over-relying on early estimates and ensures that all currently available arm--attribute pairs continue to be explored.

\emph{(ii) Cost-aware arm screening.}
If the sparse-exploration condition is not triggered, the algorithm uses the available observations to assess which active arms are potentially inexpensive to certify. It first constructs the allocation intervals
\begin{equation}
\begin{aligned}
\ell_{q,m}(t)
=
\widehat\mu_{q,m}(t)-\sqrt{\frac{8\log(t+1)}{N_{q,m}(t)}},\quad
u_{q,m}(t)
=
\widehat\mu_{q,m}(t)+\sqrt{\frac{8\log(t+1)}{N_{q,m}(t)}}.
\end{aligned}
\label{eq:track-allocation-interval}
\end{equation}
When $N_{q,m}(t)=0$, we set $\ell_{q,m}(t)=0$ and $u_{q,m}(t)=1$, based on the known range $\mu_{q,m}\in[0,1]$. These intervals are used only for allocation; they do not enter the certification rule and introduce no additional confidence parameter.

Let $\ell_q(t)=(\ell_{q,m}(t))_{m=1}^M$ and $u_q(t)=(u_{q,m}(t))_{m=1}^M$ denote the vectors of interval endpoints. For a vector $z\in\mathbb R^M$, define the certification-cost surrogate
\begin{equation*}
H(z)
=
\begin{cases}
\displaystyle
\sum_{m=1}^M \frac{2c_m}{(z_m-\xi_m)^2},
&
z_m>\xi_m,\ \forall m\in[M],\\[1ex]
+\infty,
&
\text{otherwise}.
\end{cases}
\end{equation*}
The algorithm then computes the operational cost indices
\begin{equation}
\bigl(H_q^-(t),H_q^+(t)\bigr)
=
\begin{cases}
\bigl(H(u_q(t)),H(\ell_q(t))\bigr),
&\mathcal U_q(t)\cap\mathcal M_t\neq\varnothing,\\
(+\infty,+\infty),
&\mathcal U_q(t)\cap\mathcal M_t=\varnothing.
\end{cases}
\label{eq:track-cost-interval}
\end{equation}
Here, $H_q^-(t)$ and $H_q^+(t)$ are optimistic and conservative estimates, respectively, of the verification cost required to certify arm $q$. The value $+\infty$ in the second case serves only as an allocation-feasibility device that prevents target selection from choosing an arm with no currently affordable unresolved attribute; it neither constitutes a permanent rejection nor implies that the arm has infinite intrinsic statistical complexity. In the budget-free reference process used in our analysis (Appendix~\ref{app:track-proofs}), $\mathcal M_t=[M]$ for all $t$, so the operational and raw indices coincide for every active uncertified arm.

If $\min_{q\in\mathcal Q_t} H_q^+(t)=\infty$, then either the current observations do not yet yield a finite conservative cost estimate for any active arm, or no active arm has an affordable unresolved attribute. In this case, \textsc{BMA-GAI} falls back to sparse exploration and selects a least-sampled pair in $\mathcal Q_t\times\mathcal M_t$.

\emph{(iii) Stable target and attribute allocation.}
Once at least one active arm admits a finite conservative cost estimate, the algorithm constructs the set of potentially least-cost arms and selects the one with the smallest index as the target:
\begin{equation}
q_{t+1}=\min\mathcal P_t,
\quad
\mathcal P_t
=
\left\{
q\in\mathcal Q_t:
H_q^-(t)\le \min_{i\in\mathcal Q_t} H_i^+(t)
\right\}.
\label{eq:track-stable-target}
\end{equation}
Here, $\mathcal P_t$ consists of the active arms that cannot yet be ruled out as the least costly to certify. Selecting the smallest index provides a stable tie-breaking rule and avoids unnecessary target switching caused by small fluctuations in the empirical estimates.

Given the target arm $q_{t+1}$, the algorithm then allocates verification effort across its unresolved attributes. For each $m\in\mathcal U_{q_{t+1}}(t)$, define the optimistic sample requirement $\widetilde v_{q_{t+1},m}(t)=2/(u_{q_{t+1},m}(t)-\xi_m)^2$. By construction of the operational index, $\mathcal U_{q_{t+1}}(t)\cap\mathcal M_t$ is nonempty. Moreover, finiteness of $H_{q_{t+1}}^-(t)$ implies $u_{q_{t+1},m}(t)>\xi_m$ for every unresolved attribute $m$, so the following selection rule is well defined:
\begin{equation}
m_{t+1}
\in
\arg\min_{m\in\mathcal U_{q_{t+1}}(t)\cap\mathcal M_t}
\frac{N_{q_{t+1},m}(t)}
{\widetilde v_{q_{t+1},m}(t)}.
\label{eq:track-attribute}
\end{equation}
Thus, verification is directed to the unresolved attribute whose current sample count is smallest relative to its requirement. Verifier costs enter target-arm selection through $H_q^-(t)$ and $H_q^+(t)$, whereas Eq.~(\ref{eq:track-attribute}) balances statistical evidence across the unresolved attributes of the target arm.

\paragraph{Observation and certification.}
After selecting $(q_{t+1},m_{t+1})$, \textsc{BMA-GAI} obtains a fresh verifier observation $Y_{t+1}$ and updates the cumulative cost to $C_{t+1}=C_t+c_{m_{t+1}}$. Based on the updated certificates, the algorithm identifies the set of
newly certified arms
\begin{equation*}
    \mathcal C_{t+1}
    =
    \left\{
        q\in\mathcal Q_t:
        L_{q,m}(t+1)\ge\xi_m,\ \forall m\in[M]
    \right\}.
\end{equation*}
The newly certified arms are added to the certified set and removed from the active set:
\begin{equation*}
    \mathcal S_{t+1}=\mathcal S_t\cup\mathcal C_{t+1},
    \quad
    \mathcal Q_{t+1}=\mathcal Q_t\setminus\mathcal C_{t+1}.
\end{equation*}
Thus, $\mathcal S_t$ records all arms certified within the first $t$ observations and grows monotonically. In contrast, an arm that does not yet meet the certification condition remains in $\mathcal Q_t$ and may receive further verification in subsequent rounds. Certification is therefore the only irreversible decision made by the algorithm; unreturned arms are never permanently rejected.

\paragraph{Budget update and termination.}
After each observation, \textsc{BMA-GAI} also updates the set of affordable attributes according to the remaining budget:
\begin{equation*}
    \mathcal M_{t+1}
    =
    \left\{
        m\in[M]:
        C_{t+1}+c_m\le B
    \right\}.
\end{equation*}
In other words, an attribute is excluded from further consideration once its cost exceeds the remaining budget. The next round then proceeds with the active set $\mathcal Q_{t+1}$ and the affordable set $\mathcal M_{t+1}$. This sampling--certification cycle continues as long as $\mathcal Q_t\neq\varnothing$ and $\mathcal M_t\neq\varnothing$. The algorithm stops at time $\TS$ when either all arms have been certified ($\mathcal Q_{\TS}=\varnothing$) or no further verifier call is affordable ($\mathcal M_{\TS}=\varnothing$). It then returns the accumulated certified set $\widehat{\mathcal S}(B)=\mathcal S_{\TS}$ and abstains on the remaining active arms. Hence, the number of returned answers is not fixed in advance, but is determined adaptively by the available budget and the statistical difficulty of the instance.

\section{Theoretical Guarantees}
\label{sec:track-theory}

In this section, we characterize both the performance of \textsc{BMA-GAI} and the intrinsic difficulty of multi-attribute certification under a shared verification budget. Together, matching achievability and converse bounds identify the first-order optimal coverage.

% We evaluate algorithmic performance in terms of reliability, which requires all returned arms to be good, and coverage, which measures the number of good arms certified before the budget is exhausted. Problem difficulty is captured by characteristic certification costs that combine verifier costs with the corresponding gaps to their acceptance thresholds. We first establish anytime reliability and compliance with the hard budget constraint. We then derive an achievable coverage guarantee under conditionally sub-Gaussian feedback and an information-theoretic upper bound on coverage under independent Gaussian observations. The matching bounds characterize the first-order optimal coverage away from the characteristic-complexity thresholds.

\begin{proposition}[Anytime reliability and hard-budget feasibility]
\label{thm:track-safety}
For any fixed budget $B>0$, \textsc{BMA-GAI} satisfies $C_{\TS}\le B$ almost surely and
\begin{equation}
    \mathbb P_\mu\!\left(
        \mathcal S_t\subseteq\mathcal G
        \text{ for all } t\le\TS
    \right)\ge 1-\delta.
    \label{eq:track-safety}
\end{equation}
\end{proposition}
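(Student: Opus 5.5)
The proof splits into two independent claims: hard-budget feasibility, which is deterministic, and anytime reliability, which comes from a single good event.

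\emph{Budget feasibility.} We show by induction on $t$ that $C_t\le B$ holds along every sample path. At $t=0$ we have $C_0=0<B$. At any round $t<\TS$, the loop condition gives $\mathcal M_t\neq\varnothing$. In each of the three branches the chosen attribute satisfies $m_{t+1}\in\mathcal M_t$: sparse exploration and the fallback choose from $\mathcal Q_t\times\mathcal M_t$, and Eq.~(\ref{eq:track-attribute}) chooses from $\mathcal U_{q_{t+1}}(t)\cap\mathcal M_t$, which is nonempty because $H^-_{q_{t+1}}(t)<\infty$ and hence the operational index is not in its second case. Membership in $\mathcal M_t$ means $C_t+c_{m_{t+1}}\le B$, so $C_{t+1}\le B$. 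If $\TS=\infty$ the statement $C_{\TS}\le B$ is vacuous, but in fact each round costs at least $\min_m c_m>0$. The number of rounds is therefore at most $B/\min_m c_m$, so $\TS<\infty$ and $C_{\TS}\le B$ almost surely.

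\emph{Reliability.} Define the good event
\[
\mathcal E=\Bigl\{\forall (q,m),\ \forall n\ge1:\ \bigl|\widehat\mu_{q,m,n}-\mu_{q,m}\bigr|<\sqrt{\tfrac{2}{n}\log\tfrac{4Dn^2}{\delta}}\Bigr\},
\]
where $\widehat\mu_{q,m,n}$ is the mean of the first $n$ observations of pair $(q,m)$. On $\mathcal E$, each term inside the running max of Eq.~(\ref{eq:track-certificate}) is below $\mu_{q,m}$. Together with $L_{q,m}(0)=0\le\mu_{q,m}$, this gives $L_{q,m}(t)\le\mu_{q,m}$ for all $t$.

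Suppose arm $q$ is certified at some time. Then $\xi_m\le L_{q,m}(t)\le\mu_{q,m}$ for all $m$, so $q\in\mathcal G$. Since $\mathcal S_t$ contains only arms certified by time $t$, this shows $\mathcal S_t\subseteq\mathcal G$ for all $t\le\TS$ on $\mathcal E$. In fact only the lower deviation is needed, which saves a factor of two.

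\emph{Main obstacle: bounding $\mathbb P(\mathcal E^c)$ under adaptive sampling.} Observations of a pair arrive at random, adaptively chosen times. The standard fix is a stack-of-rewards (skeleton) construction: for each pair $(q,m)$, the sequence of centered increments $Y_{\tau_k}-\mu_{q,m}$ is a martingale difference sequence with respect to the filtration $(\mathcal F_{\tau_k})$, since the $\tau_k$ are stopping times. The conditional sub-Gaussian bound~(\ref{eq:subgaussian-verifier}) is preserved at these times. Hence for each fixed $n$, $\exp(\lambda S_n-n\lambda^2/2)$ is a supermartingale with mean at most one, and Chernoff gives
\[
\mathbb P\bigl(\widehat\mu_{q,m,n}-\mu_{q,m}\le -\varepsilon_n\bigr)\le \exp(-n\varepsilon_n^2/2)=\frac{\delta}{4Dn^2}.
\]
Summing over $n\ge1$ (using $\sum 1/n^2=\pi^2/6<2$) and over the $D$ pairs gives a failure probability below $\delta\pi^2/24<\delta$. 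Including the upper tail would give at most $\delta\pi^2/12<\delta$, which is also fine.

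The only subtlety is justifying that the $n$-th sample of a pair is well defined and sub-Gaussian conditional on the past even though $n$ is reached at a random time. We handle this either with the optional-stopping argument on the per-pair martingale, or by the equivalent device of pre-generating i.i.d.-like reward stacks, which is valid because (\ref{eq:subgaussian-verifier}) is conditional on $\mathcal F_{t-1}$. Pairs never sampled $n$ times contribute nothing to the event. Therefore $\mathbb P_\mu(\mathcal E)\ge1-\delta$, and (\ref{eq:track-safety}) follows.
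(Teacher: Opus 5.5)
Your proposal is correct and follows essentially the same route as the paper. Budget feasibility is an induction on $C_t$ using $m_{t+1}\in\mathcal M_t$ in every branch, with $H^-_{q_{t+1}}(t)<\infty$ guaranteeing a nonempty affordable unresolved set. Reliability is a union bound over pairs and local counts of the optional-stopping Chernoff tail, giving failure probability at most $\pi^2\delta/12$, after which the running maximum (started at $0\le\mu_{q,m}$) never exceeds $\mu_{q,m}$.

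Two minor points, neither affecting the conclusion. First, the tail that matters for safety is the overestimate $\widehat\mu_{q,m,n}-\mu_{q,m}\ge r(n,\delta)$, not the $\le-\varepsilon_n$ tail in your one-sided remark; this is harmless because your $\mathcal E$ is two-sided and you count both tails. Second, the pre-generated-stack alternative needs independent per-pair streams, so under the general conditional sub-Gaussian model you should rely on the optional-stopping-plus-Fatou argument, exactly as the paper does.
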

Proposition~\ref{thm:track-safety} thus provides a time-uniform certification guarantee: with probability at least $1-\delta$, every arm certified at any time before termination is good, despite the adaptive allocation across arms and attributes. Moreover, the cumulative verification cost never exceeds the prescribed budget, so reliability is attained without violating the hard budget constraint.

To characterize both the algorithmic guarantee and the intrinsic problem complexity, for $q\in\mathcal G$, define
\begin{equation*}
\begin{aligned}
    \Delta_{q,m}=\mu_{q,m}-\xi_m,
    \quad
    h_{q,m}=\frac{2c_m}{\Delta_{q,m}^2},
    \quad
    h_q=\sum_{m=1}^M h_{q,m}.
\end{aligned}
\end{equation*}
Here, $h_{q,m}$ quantifies the complexity of certifying attribute $m$ of arm $q$, accounting for both the evaluation cost $c_m$ and the threshold gap $\Delta_{q,m}$, and $h_q$ aggregates this complexity over all attributes. Building on these arm-wise complexities, define, for $1\le \ell\le |\mathcal G|$,
\begin{equation}
T_\ell^*(\mu)
=
\min_{\substack{\mathcal S\subseteq\mathcal G\\|\mathcal S|=\ell}}
\sum_{q\in\mathcal S} h_q,
\quad
T_0^*(\mu)=0.
\label{eq:track-characteristic-closed}
\end{equation}
Thus, $T_\ell^*(\mu)$ is the minimum aggregate complexity required to certify any $\ell$ good arms, attained by the $\ell$ good arms with the smallest $h_q$. The following theorem shows that this benchmark directly governs the coverage achieved by \textsc{BMA-GAI} under a global budget.

\begin{theorem}[Achievable coverage under a global budget]
\label{thm:track-coverage-achievable}
Fix an analysis slack $\varepsilon\in(0,1)$ and a normalized budget $b>0$, and define $\underline K_\varepsilon(b)=\max\!\left\{\ell\in\{0,\ldots,|\mathcal G|\}:(1+\varepsilon)T_\ell^*(\mu)<b\right\}$. Then, with $B_\delta=b\log(1/\delta)$, \textsc{BMA-GAI} satisfies
\begin{equation*}
    \mathbb P_\mu\!\left(
        \widehat{\mathcal S}(B_\delta)\subseteq\mathcal G,
        \quad
        |\widehat{\mathcal S}(B_\delta)|
        \ge\underline K_\varepsilon(b)
    \right)\longrightarrow1
    \quad\text{as }\delta\downarrow0.
\end{equation*}
\end{theorem}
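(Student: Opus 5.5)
The plan is to split the event into a reliability part and a coverage part. The reliability part, $\widehat{\mathcal S}(B_\delta)\subseteq\mathcal G$, holds with probability at least $1-\delta\to1$ by Proposition~\ref{thm:track-safety}. It therefore suffices to show that $\mathbb P_\mu(|\widehat{\mathcal S}(B_\delta)|<\underline K_\varepsilon(b))\to0$. Write $K=\underline K_\varepsilon(b)$; if $K=0$ there is nothing to prove. I will compare the algorithm with the budget-free reference process mentioned in Section~\ref{sec:track-algorithm}. This process runs the same sampling rule with $\mathcal M_t=[M]$ forever, so the operational and raw indices coincide. It agrees with \textsc{BMA-GAI} path by path up to the first time the budget binds. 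So it suffices to show that, in the reference process, the first $K$ arms of the $h$-ordering (the least-cost good arms, with ties broken arbitrarily) are all certified by a time $\tau_K$ with $C_{\tau_K}\le(1+\varepsilon)T_K^*(\mu)\log(1/\delta)+o(\log(1/\delta))<B_\delta$ with high probability. The strict inequality in the definition of $K$ provides the room needed to absorb lower-order terms.

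\textbf{Step 1: a good event for the allocation intervals.} Let $\mathcal E_T$ be the event that for all $t\ge T^{1/4}$ (or $t\ge\sqrt T$) and all pairs, $\mu_{q,m}\in[\ell_{q,m}(t),u_{q,m}(t)]$. A union bound over $N$ and pairs with the $\sqrt{8\log(t+1)/N}$ width gives $\mathbb P(\mathcal E_T^c)=O(T^{-c})$. Sparse exploration forces $N_{q,m}(t)\gtrsim\sqrt t/D$ for every active pair, so the widths shrink like $t^{-1/4}\sqrt{\log t}$. Hence, after a time $t_0(\eta)$ that does not depend on $\delta$, every interval lies in an $\eta$-neighbourhood of the corresponding mean on $\mathcal E_T$. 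Taking $T\asymp\log(1/\delta)$, the $\delta$-independent horizon $t_0$ costs only $o(\log(1/\delta))$. As a consequence, $H_q^\pm(t)$ lies within a factor $(1\pm\varepsilon/4)$ of $h_q$ for good active arms and is infinite for bad arms. Bad arms, which satisfy $\mu_{q,m}<\xi_m$ for some $m$, have $H^-_q=\infty$ once their interval falls below $\xi_m$, so they are never targeted. The total sparse-exploration cost is $O(\sqrt{\tau})=o(\log(1/\delta))$.

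\textbf{Step 2: target tracking and per-arm cost.} On $\mathcal E_T$, after $t_0$, the set $\mathcal P_t$ contains only active good arms whose $h_q$ is within a factor $1+\varepsilon/2$ of the smallest remaining $h$. The smallest-index rule then keeps the target fixed until that arm is certified. While arm $q$ is targeted, the rule in Eq.~(\ref{eq:track-attribute}) drives $N_{q,m}$ proportionally to $\widetilde v_{q,m}\approx 2/\Delta_{q,m}^2$. A law-of-iterated-logarithm–type deviation bound on $\widehat\mu$, together with the radius $\sqrt{2\log(4DN^2/\delta)/N}$, shows that $L_{q,m}\ge\xi_m$ once $N_{q,m}\ge(1+\varepsilon/4)\,2\log(1/\delta)/\Delta_{q,m}^2+O(\log\log(1/\delta))$, with probability tending to one. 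The attribute cost is then $c_mN_{q,m}\approx h_{q,m}\log(1/\delta)$, and summing over $m$ gives arm $q$'s cost of about $h_q\log(1/\delta)$. An attribute certified early only reduces the cost. Sequential targeting certifies arms in near-$h$ order. Near-ties cause no problem: whichever arms within $1+\varepsilon/2$ of each other get picked, the first $K$ certified ones have total cost at most $(1+\varepsilon/2)T_K^*(\mu)\log(1/\delta)$ plus lower-order terms. Samples spent on other arms outside target phases are only sparse exploration, which is $o(\log(1/\delta))$.

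\textbf{Main obstacle.} The delicate step is Step 2's claim that targeting does not waste budget on arms that are eventually abandoned. The tracking argument needs the allocation intervals, which use $\log t$ confidence and are not $\delta$-valid, to stabilize within $o(\log(1/\delta))$ samples. It also needs the stable smallest-index rule to avoid oscillating between two near-tied arms, each partially sampled. I would handle this by noting that once $\mathcal E_T$ holds and $t\ge t_0$, the ordering of the $H^\pm$ intervals is frozen up to the $\varepsilon$-slack. Any switch can then only move to an arm whose total cost also fits within $(1+\varepsilon)$ of the minimum, and partial effort on an arm is never lost because that arm is later completed. Finally, union bounds give $\mathbb P(|\widehat{\mathcal S}|<K)\le\mathbb P(\mathcal E_T^c)+o(1)+\delta\to0$.
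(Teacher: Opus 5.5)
\textbf{Verdict: the overall architecture matches the paper's, but Step~2 has a genuine gap exactly at the point you flagged as the main obstacle.} The shared architecture is: reliability from Proposition~\ref{thm:track-safety}; coupling with the budget-free reference process while $C_t\le B_\delta-c_{\max}$; a $\log t$-based good event with an $o(\log(1/\delta))$ burn-in; counts $N_{q,m}(t)\gtrsim\sqrt t/D$; and per-attribute crossing near $2(1+\eta)\log(1/\delta)/\Delta_{q,m}^2$.

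\textbf{The gap.} Your characterization is that $\mathcal P_t$ contains only good arms whose $h_q$ is within a factor $1+\varepsilon/2$ of the smallest remaining $h$. It does not follow that the smallest-index rule keeps the target fixed. If $h_{q'}$ is slightly larger than $h_q$ but $q'<q$, then whether $q'\in\mathcal P_t$ depends on fluctuating interval endpoints, so the target can alternate between $q'$ and $q$.

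Your remedy, that partial effort is never lost because the arm is later completed, does not bound $C_{\tau_K}$. Every sample drawn before $\tau_K$ counts, and the partially sampled arm may be completed only after $\tau_K$. Examples are a near-tie between the $K$-th and $(K+1)$-th arms in $h$-order, or $K=1$ with two near-tied arms. Nothing in your stated facts prevents that wasted effort from being of order $\log(1/\delta)$. In that case $C_{\tau_K}$ could approach $(T_K^*(\mu)+h_{q'})\log(1/\delta)$, which no $\varepsilon$-slack absorbs. For the same reason, bounding the total cost of the first $K$ certified arms by $(1+\varepsilon/2)T_K^*(\mu)\log(1/\delta)$ controls only effort on certified arms, not $C_{\tau_K}$.

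\textbf{The fix: make the stabilization exact rather than $\varepsilon$-approximate.} This is what the paper's Lemma~\ref{lem:track-subg-stabilization} does.
\begin{itemize}
\item Since $\mu$ is fixed, let $\rho>0$ be the smallest positive separation among distinct good-arm complexities.
\item Your interval widths are bounded by a deterministic, $\delta$-independent $e(t)\to0$. So there is a $\delta$-independent $t_*(\mu)$ after which $|H_q^\pm(t)-h_q|<\rho/3$ for active good arms and $H_q^-=\infty$ for bad arms.
\item Then $\mathcal P_t=\arg\min_{q\in\mathcal Q_t\cap\mathcal G}h_q$ exactly. Tied arms stay in because $H_q^-\le h_{\min}\le\min_i H_i^+$.
\item Hence the target is always the first active arm in $(h_q,q)$-order, and until the $K$-th output it lies in the set $S_K^*$ of the first $K$ arms in that order. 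No switching occurs after the burn-in.
\end{itemize}

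\textbf{Cost accounting after the fix.} You then need neither the proportional tracking of Eq.~(\ref{eq:track-attribute}) nor a law-of-iterated-logarithm bound.
\begin{itemize}
\item An ordinary draw hits an unresolved attribute. Because $L_{q,m}$ is a running maximum, that attribute's pre-draw count is below its first crossing count $n_{q,m}^\dagger(\delta)$.
\item A single fixed-count concentration bound gives $n_{q,m}^\dagger(\delta)\le 2(1+\eta)\log(1/\delta)/\Delta_{q,m}^2$ simultaneously for all good pairs, with probability tending to one.
\item So ordinary cost before the $K$-th output is at most $(1+\eta)T_K^*(\mu)\log(1/\delta)$.
\end{itemize}

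\textbf{A smaller circularity.} Your bound $O(\sqrt\tau)=o(\log(1/\delta))$ on sparse-exploration cost is circular as written. Close it as the paper does: combine $C(t)\le W+c_{\max}\sqrt t$ with $C(t)\ge c_{\min}t$ and solve the resulting quadratic inequality in $\sqrt{C(t)}$. This also shows that the $K$-th output actually occurs.
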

Theorem~\ref{thm:track-coverage-achievable} provides an asymptotic performance guarantee for \textsc{BMA-GAI} at normalized budget $b$: with probability tending to one, the algorithm returns only good arms and certifies at least $\underline K_\varepsilon(b)$ of them. In other words, \textsc{BMA-GAI} asymptotically attains every coverage level $\ell$ whose aggregate certification complexity satisfies $(1+\varepsilon)T_\ell^*(\mu)<b$. The proof first bounds the certification costs in a budget-free reference process and then couples the finite-budget algorithm with this process for as long as every attribute remains affordable.

We next show that, under Gaussian observations, the same quantity $T_\ell^*(\mu)$ also characterizes the intrinsic complexity of the problem. Specifically, consider independent unit-variance observations $\mathcal N(\mu_{q,m},1)$, and let $\Pi_\delta(B)$ denote the class of policies that are uniformly $\delta$-reliable over this Gaussian model class and satisfy the hard budget $B$ on every admissible instance.

\begin{theorem}[Information-theoretic coverage limit]
\label{thm:track-coverage-converse}
Fix $\varepsilon\in(0,1)$ and $b>0$, and define
$
\overline K_\varepsilon(b)
=
\max\!\left\{
\ell\in\{0,\ldots,|\mathcal G|\}:
(1-\varepsilon)T_\ell^*(\mu)\le b
\right\}.
$
Then, with $B_\delta=b\log(1/\delta)$,
\begin{equation*}
\sup_{\pi\in\Pi_\delta(B_\delta)}
\mathbb P_{\mu,\pi}\left(
|\widehat{\mathcal S}_\pi(B_\delta)|
> \overline K_\varepsilon(b)
\right)
\longrightarrow 0
\quad\text{as }\delta\downarrow0.
\end{equation*}
\end{theorem}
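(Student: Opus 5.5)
The plan is a change-of-measure argument, applied arm by arm and then aggregated through the budget constraint. Set $\ell=\overline K_\varepsilon(b)+1$; if $\ell>|\mathcal G|$ the event $|\widehat{\mathcal S}_\pi|>\overline K_\varepsilon(b)$ forces a bad arm into the output, which has probability at most $\delta$. Otherwise $(1-\varepsilon)T_\ell^*(\mu)>b$. The key claim is this: for every good arm $q$ and every $\eta>0$, any $\pi\in\Pi_\delta(B_\delta)$ satisfies
\begin{equation*}
\mathbb P_{\mu,\pi}\Bigl(q\in\widehat{\mathcal S}_\pi,\ \textstyle\sum_m c_m N_{q,m}(\TS)\le(1-\eta)h_q\log(1/\delta)\Bigr)\longrightarrow0,
\end{equation*}
uniformly over such policies. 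On the complementary event, returning $\ell$ arms costs at least $(1-\eta)\sum_{q\in\widehat{\mathcal S}}h_q\log(1/\delta)\ge(1-\eta)T_\ell^*(\mu)\log(1/\delta)$. Choosing $\eta$ small enough that $(1-\eta)T_\ell^*(\mu)>b$, this exceeds $B_\delta$ and contradicts the hard budget. A union bound over the finitely many arms, together with the at most $\delta$ probability of returning a bad arm, then gives the theorem.

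To prove the per-arm claim, fix $q\in\mathcal G$ and, for each $m$, build an alternative Gaussian instance $\lambda^{(m)}$ that changes only $\mu_{q,m}$, moving it to $\xi_m-\gamma$ for a small $\gamma>0$. Under $\lambda^{(m)}$ arm $q$ is bad, so uniform $\delta$-reliability gives $\mathbb P_{\lambda^{(m)}}(q\in\widehat{\mathcal S})\le\delta$. The log-likelihood ratio is a sum over the $N_{q,m}$ samples of $(q,m)$ only, with mean increment $(\Delta_{q,m}+\gamma)^2/2$ per sample. A high-probability change of measure (the lemma of Garivier--Kaufmann type, or a direct argument via the event $\{\text{LLR}\le\log(1/\delta)(1-\eta')\}$) then shows the following: with $\mathbb P_\mu$-probability tending to one on the event $\{q\in\widehat{\mathcal S}\}$, we have $N_{q,m}\ge(1-\eta')\,2\log(1/\delta)/(\Delta_{q,m}+\gamma)^2$. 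Concretely, $\mathbb P_\mu(q\in\widehat{\mathcal S},\ N_{q,m}\le n_m,\ \text{LLR}_{n}\le(1-\eta'')\log(1/\delta))\le\delta^{\eta''}$. The LLR with small sample count is controlled by a maximal inequality: the Gaussian random walk's running maximum over $n\le n_m$ exceeds its drift by $o(\log(1/\delta))$ with probability tending to one. Multiplying by $c_m$, summing over $m$, and letting $\gamma,\eta'\to0$ gives the required cost bound $\sum_m c_mN_{q,m}\ge(1-\eta)h_q\log(1/\delta)$. The bound is needed simultaneously for all $m$, and each single-coordinate alternative gives its own bound, so a union over $m$ suffices.

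The main obstacle is making this a high-probability statement on the sample paths, not merely a bound in expectation, and doing so uniformly over policies, since the theorem concerns a supremum over $\Pi_\delta(B_\delta)$. The plan is to use the standard decomposition
\begin{equation*}
\mathbb P_\mu(E\cap\{\text{LLR}\le a\})\le e^{a}\,\mathbb P_\lambda(E)\le e^{a}\delta,
\end{equation*}
with $a=(1-\eta'')\log(1/\delta)$. The remaining piece, $\{\text{LLR}>a,\ N_{q,m}\le n_m\}$, is bounded through the running maximum of an i.i.d. Gaussian walk indexed by its own sample count. That bound is policy-independent because the per-pair observations form a fixed i.i.d. stack, so uniformity is automatic.
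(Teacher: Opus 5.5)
Your proposal is correct and follows essentially the same route as the paper. The shared steps are: a single-coordinate alternative $\lambda_{q,m}=\xi_m-\gamma$ for each good pair; a change of measure split at a likelihood-ratio level $(1-\eta'')\log(1/\delta)$; a policy-independent maximal inequality for the Gaussian walk of that pair's own samples; a union bound over good pairs together with the probability at most $\delta$ of a bad output; and aggregation through the hard budget. The differences are cosmetic: the paper uses the theorem's $\varepsilon$ directly in the per-pair threshold $2(1-\varepsilon)\log(1/\delta)/\Delta_{q,m}^2$ instead of your extra slacks, and it makes explicit two facts you leave implicit, namely that $\gamma<\xi_m$ keeps the alternative admissible and that the hard budget bounds the stopping time, which justifies the transcript likelihood ratio.
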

Theorem~\ref{thm:track-coverage-converse} shows that no uniformly reliable policy can asymptotically certify more than $\overline K_\varepsilon(b)$ arms with nonvanishing probability. Together with Theorem~\ref{thm:track-coverage-achievable}, this identifies $T_\ell^*(\mu)$ as the first-order complexity governing achievable coverage: \textsc{BMA-GAI} attains every coverage level whose complexity lies below the budget, whereas no uniformly reliable policy can exceed the corresponding information-theoretic limit.

Combining the achievability and converse results yields the following first-order optimality guarantee.

\begin{corollary}[First-order optimal coverage]
\label{cor:track-coverage-optimality}
Under the Gaussian model, fix $b>0$ with
$b\notin\{T_\ell^*(\mu):1\le\ell\le|\mathcal G|\}$, and define
$
K(b)
=
\max\left\{
\ell\in\{0,\ldots,|\mathcal G|\}:
T_\ell^*(\mu)\le b
\right\}.
$
Then, for $B_\delta=b\log(1/\delta)$,
\begin{equation*}
\mathbb P_\mu\left(
\widehat{\mathcal S}(B_\delta)\subseteq\mathcal G,
\quad
|\widehat{\mathcal S}(B_\delta)|=K(b)
\right)
\longrightarrow 1
\quad\text{as }\delta\downarrow0.
\end{equation*}
Moreover,
\begin{equation*}
\begin{aligned}
\lim_{\delta\downarrow0}
\mathbb E_\mu[|\widehat{\mathcal S}(B_\delta)|]
=
\lim_{\delta\downarrow0}
\sup_{\pi\in\Pi_\delta(B_\delta)}
\mathbb E_{\mu,\pi}[|\widehat{\mathcal S}_\pi(B_\delta)|]
&=
K(b).
\end{aligned}
\end{equation*}
\end{corollary}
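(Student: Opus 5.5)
The plan is to combine Theorem~\ref{thm:track-coverage-achievable} and Theorem~\ref{thm:track-coverage-converse}, choosing the slack $\varepsilon$ small enough that both $\underline K_\varepsilon(b)$ and $\overline K_\varepsilon(b)$ equal $K(b)$. Since $b\notin\{T_\ell^*(\mu):1\le\ell\le|\mathcal G|\}$ and $T_0^*(\mu)=0<b$, every $\ell$ satisfies either $T_\ell^*(\mu)<b$ or $T_\ell^*(\mu)>b$. All gaps are nonzero, so the $T_\ell^*(\mu)$ are finite and strictly increasing in $\ell$. Hence there is an $\varepsilon\in(0,1)$ with
\[
(1+\varepsilon)T_{K(b)}^*(\mu)<b
\quad\text{and}\quad
(1-\varepsilon)T_{K(b)+1}^*(\mu)>b
\]
(the second condition is vacuous if $K(b)=|\mathcal G|$). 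For this $\varepsilon$, monotonicity of $\ell\mapsto T_\ell^*(\mu)$ gives $\underline K_\varepsilon(b)=K(b)=\overline K_\varepsilon(b)$. This is the only place the non-criticality of $b$ is used.

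For the high-probability statement, Theorem~\ref{thm:track-coverage-achievable} gives
\[
\mathbb P_\mu\bigl(\widehat{\mathcal S}(B_\delta)\subseteq\mathcal G,\ |\widehat{\mathcal S}(B_\delta)|\ge K(b)\bigr)\to1 .
\]
For the reverse inequality I would apply Theorem~\ref{thm:track-coverage-converse} to \textsc{BMA-GAI} itself. This requires checking that \textsc{BMA-GAI}$\in\Pi_\delta(B_\delta)$. Hard-budget feasibility on every instance and uniform $\delta$-reliability both follow from Proposition~\ref{thm:track-safety}, because its proof uses only the sub-Gaussian condition~\eqref{eq:subgaussian-verifier}. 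That condition holds for unit-variance Gaussians, and the stopping condition $\mathcal M=\varnothing$ or $\mathcal Q=\varnothing$ depends on no instance parameters. One point needs care: the Gaussian model has $\mu_{q,m}\in[0,1]$, matching the assumption used in the allocation intervals. Then $\mathbb P_\mu(|\widehat{\mathcal S}(B_\delta)|>K(b))\to0$, and intersecting the two events gives the first claim.

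For the expectations, I would observe that $|\widehat{\mathcal S}|\le Q$ is bounded. Convergence in probability of $|\widehat{\mathcal S}(B_\delta)|$ to $K(b)$ therefore gives $\mathbb E_\mu[|\widehat{\mathcal S}(B_\delta)|]\to K(b)$ by bounded convergence. For the supremum, the lower bound $\liminf\sup_\pi\ge K(b)$ follows because \textsc{BMA-GAI}$\in\Pi_\delta(B_\delta)$. For the upper bound, for any $\pi\in\Pi_\delta(B_\delta)$,
\[
\mathbb E_{\mu,\pi}[|\widehat{\mathcal S}_\pi|]\le K(b)+|\mathcal G|\cdot\mathbb P_{\mu,\pi}\bigl(|\widehat{\mathcal S}_\pi|>K(b)\bigr).
\]
Here I would need $|\widehat{\mathcal S}_\pi|\le|\mathcal G|$, which fails on the unreliable event. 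So I would instead use the trivial bound $Q$ in place of $|\mathcal G|$. Taking the supremum, the second term vanishes uniformly by Theorem~\ref{thm:track-coverage-converse}, which is stated with the supremum inside the limit. The main obstacle is the membership \textsc{BMA-GAI}$\in\Pi_\delta(B_\delta)$, that is, confirming that Proposition~\ref{thm:track-safety} holds uniformly over the Gaussian class. The rest is bookkeeping.
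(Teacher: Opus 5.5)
Your proposal is correct and matches the route the paper indicates when it says the corollary follows by combining Theorems~\ref{thm:track-coverage-achievable} and~\ref{thm:track-coverage-converse}: non-criticality of $b$ gives an $\varepsilon$ with $\underline K_\varepsilon(b)=\overline K_\varepsilon(b)=K(b)$, Proposition~\ref{thm:track-safety} places \textsc{BMA-GAI} in $\Pi_\delta(B_\delta)$ so the converse applies to it, and the bound $|\widehat{\mathcal S}|\le Q$ carries the probability statements over to the expectations. The membership obstacle you flag is resolved exactly this way, and reliability does not even need $\mu_{q,m}\ge 0$, since certification requires some raw lower bound to reach $\xi_m>0$.
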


Thus, $K(b)$ is the maximum reliable coverage permitted by the problem complexity at normalized budget $b$, and \textsc{BMA-GAI} asymptotically attains it; that is, the number of good arms certified by \textsc{BMA-GAI} matches the information-theoretic limit of the instance to first order. Critical budgets $b=T_\ell^*(\mu)$ are excluded to avoid boundary cases in which lower-order terms determine whether the $\ell$th arm can be certified.

\section{Experiments}
\label{sec:experiments}

We evaluate \textsc{BMA-GAI} on controlled synthetic instances and on a verification task derived from GSM8K~\citep{cobbe2021training}. The synthetic experiments examine reliable coverage under a fixed budget and its dependence on the intrinsic certification complexity, and compare the results with the complexity predicted by our theory. The GSM8K experiment evaluates whether \textsc{BMA-GAI} can effectively allocate a limited verification budget across cheap, medium-cost, and expensive verifiers. 

\paragraph{Baselines.}
We compare \textsc{BMA-GAI} with two baselines. Uniform spreads the budget evenly across active arm--attribute pairs and applies the same certification rule as \textsc{BMA-GAI}, thereby measuring the benefit of adaptive allocation. \textsc{CB-GAI} is a cost-blind variant of good-arm identification that uses the same confidence intervals and threshold-certification rule but ignores verifier costs when selecting arm--attribute pairs, thereby isolating the benefit of cost-aware allocation.

\subsection{Synthetic experiments}

\begin{figure}[t]
    \centering
    \setlength{\abovecaptionskip}{0cm}
    \setlength{\belowcaptionskip}{0cm}
    \includegraphics[width=\linewidth]{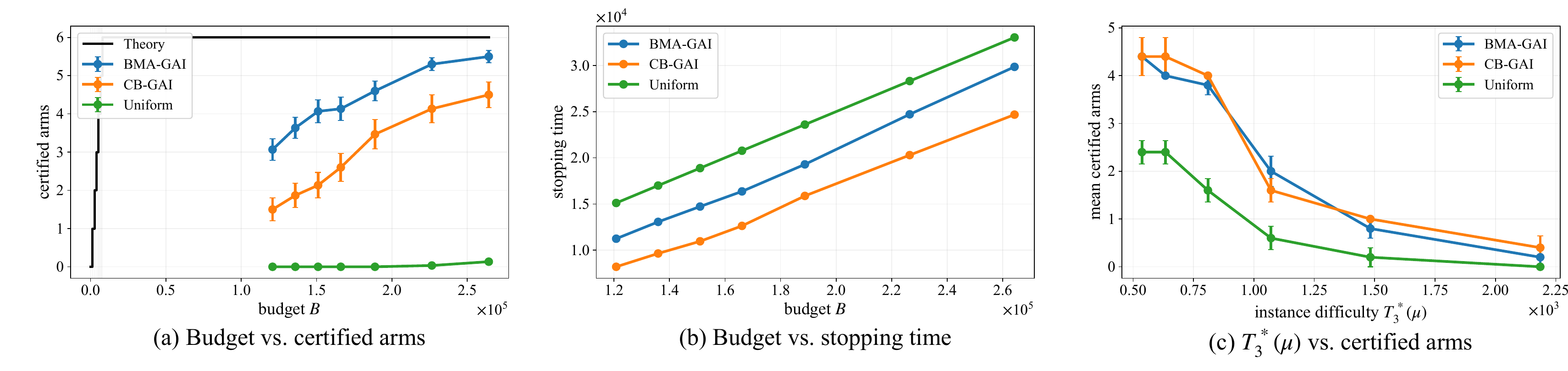}
    \caption{Synthetic experiments. (a) Reliable coverage compared with the characteristic-complexity staircase; (b) the corresponding stopping time; (c) coverage as the characteristic difficulty $T_3^*(\mu)$ varies across problem instances. Error bars indicate standard errors.}
    \label{fig:synthetic-main}
\end{figure}

\paragraph{Experimental setup.}
All synthetic experiments use Gaussian observations
$Y_{q,m}\sim\mathcal N(\mu_{q,m},1)$ with thresholds $\xi_m=0.5$, $Q=12$ arms, and $M=4$ attributes. Figures~\ref{fig:synthetic-main}(a,b) use a heterogeneous-cost instance with $|\mathcal G|=6$ and costs $c=(1,2,4,25)$. The instance is constructed so that some bad arms appear plausible under the cheap attributes and can be ruled out only with evidence from the expensive attribute, whereas good arms can be certified through more cost-efficient attributes. We fix $\delta=0.02$ and vary the budget $B=b\log(1/\delta)$ over seven levels, with normalized budgets $b\in\{16,18,20,22,25,30,35\}\times T_{|\mathcal G|}^*(\mu)$. All methods use the same anytime certification radius and differ only in their allocation rules. Figure~\ref{fig:synthetic-main}(c) uses a second family of instances with costs $c=(1,2,4,8)$, in which the good-arm gaps are scaled to produce different values of $T_3^*(\mu)$; for this experiment, we fix $B=90000$ and $\delta=0.02$. We report the mean number of certified good arms and the mean stopping time, together with standard errors, over 30 independent repetitions.

\paragraph{Results.}
Figure~\ref{fig:synthetic-main} shows a consistent advantage of \textsc{BMA-GAI} over both baselines. As shown in Figure~\ref{fig:synthetic-main}(a), \textsc{BMA-GAI} achieves substantially higher reliable coverage across a wide range of budgets, whereas the coverage of Uniform grows slowly because it wastes observations on arm--attribute pairs that are not critical for certification. Although \textsc{CB-GAI} benefits from adaptive sampling, it ignores attribute-dependent costs and may therefore spend excessive budget on arms that look promising after cheap observations but ultimately require expensive attributes to be certified or ruled out. By jointly accounting for estimated gaps and verifier costs, \textsc{BMA-GAI} prioritizes arms whose remaining certificates are both statistically plausible and cheap to complete, and thus uses the budget more effectively. The staircase pattern in Figure~\ref{fig:synthetic-main}(a) also agrees with the characteristic-complexity thresholds: an additional arm becomes reliably certifiable only once the budget exceeds its corresponding complexity level. Figure~\ref{fig:synthetic-main}(b) further confirms that the gain stems from more efficient sampling rather than a weaker stopping criterion, since all methods use the same anytime certificate. Finally, Figure~\ref{fig:synthetic-main}(c) shows that the coverage of all methods decreases as $T_3^*(\mu)$ increases, supporting its role as an intrinsic measure of instance difficulty; nevertheless, the persistent advantage of \textsc{BMA-GAI} indicates that cost-aware allocation remains beneficial across difficulty levels. Additional experimental results on budget allocation and the interaction between verification cost and statistical difficulty are provided in Appendix~\ref{app:additional-experiments}.

\subsection{Real-world experiments}

\begin{figure}[t]
    \centering
    \setlength{\abovecaptionskip}{0cm}
    \setlength{\belowcaptionskip}{0cm}
    \includegraphics[width=\linewidth]{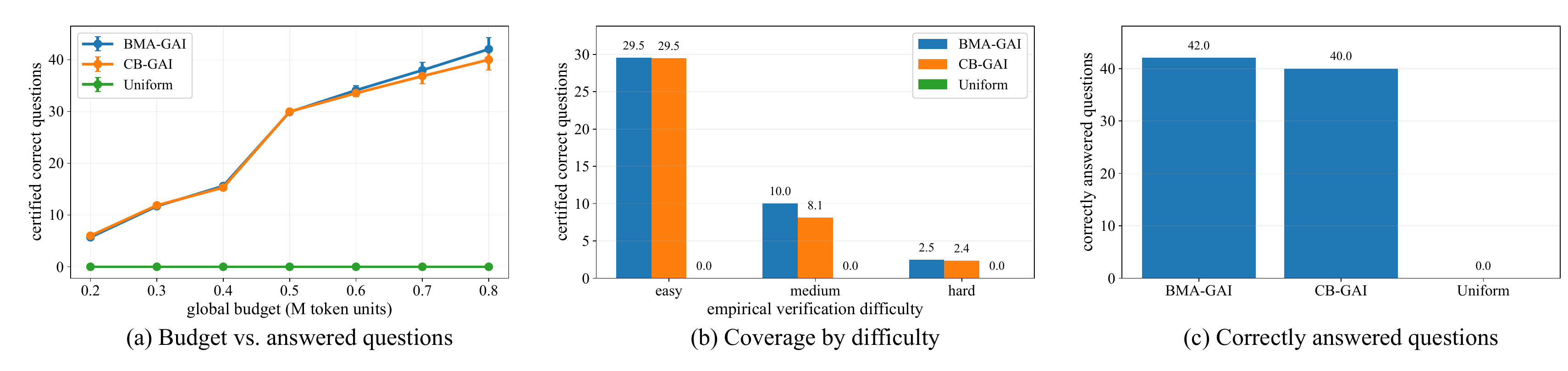}
    \caption{GSM8K experiments. (a) Reliable coverage as the global budget varies; (b) certified correct answers stratified by empirical verification difficulty; (c) final number of correctly answered questions at the largest budget. Error bars indicate standard errors.}
    \label{fig:real-world-gsm8k}
\end{figure}

\paragraph{Experimental setup.}
We construct a real-world instance from 100 GSM8K test questions. For each question, we generate one fixed candidate solution with Qwen3-8B and treat each question--candidate pair as an arm. Each arm has three verifier attributes: the cheap verifier checks whether the solution contains a parseable final answer; the medium verifier solves the question once with Qwen3-8B and compares its answer with the candidate answer; and the expert verifier repeats this procedure five times and compares the majority answer with the candidate answer. Each verifier call returns a binary accept/reject outcome together with its token cost, consistent with our multi-attribute bandit model; during each run, verifier observations are replayed by sampling with replacement from precomputed outcomes. We set the acceptance threshold to $\xi_m=0.5$ for all three attributes. The GSM8K ground-truth answers are never accessible to any algorithm during verification; they are used only after each run to assess whether the certified candidates are correct.

% We vary the global budget from 0.2M to 0.8M token units and evaluate each budget setting over 30 independent runs.

\paragraph{Results.}
Figure~\ref{fig:real-world-gsm8k} demonstrates the advantage of adaptive and cost-aware verification on GSM8K. As shown in Figure~\ref{fig:real-world-gsm8k}(a), Uniform certifies almost no answers under limited budgets: by spreading verification effort evenly across questions and verifiers, it leaves most questions with insufficient evidence to meet the certification criterion. In contrast, \textsc{BMA-GAI} and \textsc{CB-GAI} adaptively concentrate verification on questions that are more likely to be certified. Figure~\ref{fig:real-world-gsm8k}(b) further shows that the adaptive methods certify easy and medium questions first before spending heavily on hard ones, which improves budget efficiency by avoiding excessive effort on questions that remain difficult to resolve. The gap between \textsc{BMA-GAI} and \textsc{CB-GAI} stems mainly from cost awareness: \textsc{BMA-GAI} explicitly accounts for heterogeneous verifier costs and therefore favors verifier calls that yield certification evidence more cost-efficiently. Finally, because certification is performed without access to the ground truth and relies on imperfect verifiers, a certified answer is not necessarily correct. Figure~\ref{fig:real-world-gsm8k}(c) therefore compares the certified candidates with the gold answers after each run to measure the actual useful coverage, on which \textsc{BMA-GAI} again achieves the best overall performance.

\section{Conclusion and Future Work}
\label{sec:conclusion}

This paper studied multi-attribute good-arm identification under a global verification budget, motivated by the allocation of test-time verification computation across LLM-generated candidate answers. We proposed \textsc{BMA-GAI}, which jointly selects candidate answers and verification attributes via cost-aware adaptive allocation while providing anytime-valid certification under a hard budget. Our analysis identified the aggregate certification complexity $T_\ell^*(\mu)$ as the fundamental quantity governing reliable coverage. We established an achievable coverage guarantee together with a matching information-theoretic converse, showing that \textsc{BMA-GAI} attains the optimal coverage to first order away from critical budget thresholds. Experiments on synthetic instances and GSM8K further demonstrated that adaptive, cost-aware verification uses a shared budget more efficiently than uniform or cost-blind allocation.

Several directions remain open. An important one is to jointly allocate computation between candidate generation and verification. It would also be interesting to study richer verification models with unknown or time-varying evaluation costs, and to derive sharper finite-budget guarantees.

\subsection*{AI use statement}

We used generative AI tools to assist with drafting and revising the manuscript, checking technical consistency, and developing experimental code. In the GSM8K experiments, Qwen3-8B was used to generate the candidate solutions and the solution samples underlying the verification procedures. The authors take full responsibility for the problem formulation, theoretical claims, experimental methodology, and reported results, including all text, code, and figures prepared with AI assistance.

\subsection*{Reproducibility statement}

The problem formulation and the \textsc{BMA-GAI} algorithm are presented in Sections~\ref{sec:prelim} and~\ref{sec:track-algorithm}, and proofs of the main theoretical results are given in Appendix~\ref{app:track-proofs}. Section~\ref{sec:experiments} describes the synthetic Gaussian experiments and the GSM8K-derived verification task, together with the baselines and evaluation metrics. The GSM8K experiments use one fixed candidate answer per question and replay observations by sampling with replacement from precomputed verifier outcomes with recorded token costs. To facilitate reproduction, we will publicly release the code and materials upon acceptance, including synthetic instance generators, implementations of the algorithm and baselines, GSM8K question identifiers, candidate solutions, verifier outcomes and costs, experimental configurations and random seeds, raw trial outputs, result summaries, and plotting scripts.

\bibliographystyle{plainnat}
\bibliography{ref}

\newpage

\appendix
\section{Additional Experiments}
\label{app:additional-experiments}

We present two additional synthetic diagnostics that examine how the verification budget is distributed across arms and how cost heterogeneity interacts with statistical difficulty. Both use the same three methods as Section~\ref{sec:experiments}. The arm categories shown in the figures are used only for analysis and are never revealed to any algorithm.

\begin{figure}[h]
    \centering
    \setlength{\abovecaptionskip}{0cm}
    \setlength{\belowcaptionskip}{0cm}
    \includegraphics[width=\linewidth]{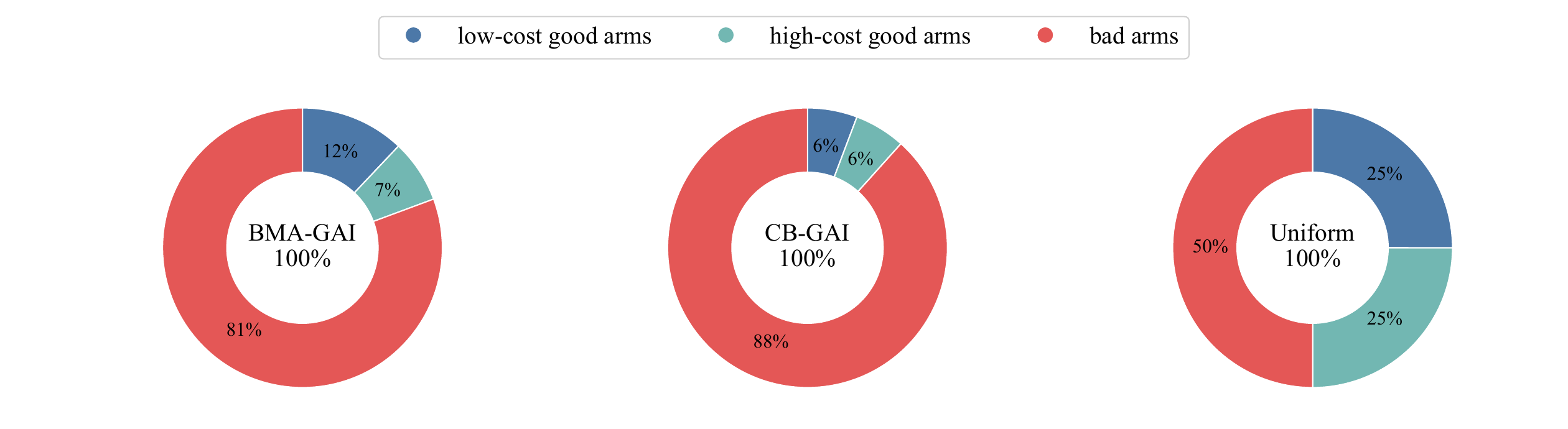}
    \caption{Final budget allocation on the expensive-decoy synthetic instance. Each circular panel corresponds to one algorithm after the budget is exhausted, and its three sectors show the budget shares spent on low-certification-cost good arms, high-certification-cost good arms, and bad arms.}
    \label{fig:app-allocation-process}
\end{figure}

\paragraph{Final budget allocation by arm type.}
Figure~\ref{fig:app-allocation-process} shows the final budget allocation across three arm types on a stress-test instance with $Q=12$ arms, $M=4$ attributes, costs $c=(1,2,4,25)$, and $|\mathcal G|=6$. The bad arms appear statistically promising on the inexpensive attributes and fail only on the costly one. The good arms are split into low- and high-certification-cost groups according to their characteristic costs $h_q$. The total budget is set to $12T_{|\mathcal G|}^*(\mu)\log(1/\delta)$ with $\delta=0.02$, and each panel averages results over eight independent repetitions.

The resulting allocation patterns reveal clear differences among the methods. Uniform spreads its budget nearly evenly across good and bad arms and therefore cannot accumulate enough evidence to certify good arms in this budget regime. Both adaptive methods instead concentrate their budgets on statistically promising arms. However, \textsc{CB-GAI} spends a larger fraction of its budget on the bad decoy arms because it ignores verifier costs when selecting targets. In contrast, \textsc{BMA-GAI} devotes more budget to low-certification-cost good arms, reflecting its preference for arms whose certificates are both statistically plausible and cheap to complete.

\begin{figure}[h]
    \centering
    \setlength{\abovecaptionskip}{0cm}
    \setlength{\belowcaptionskip}{0cm}
    \includegraphics[width=0.9\linewidth]{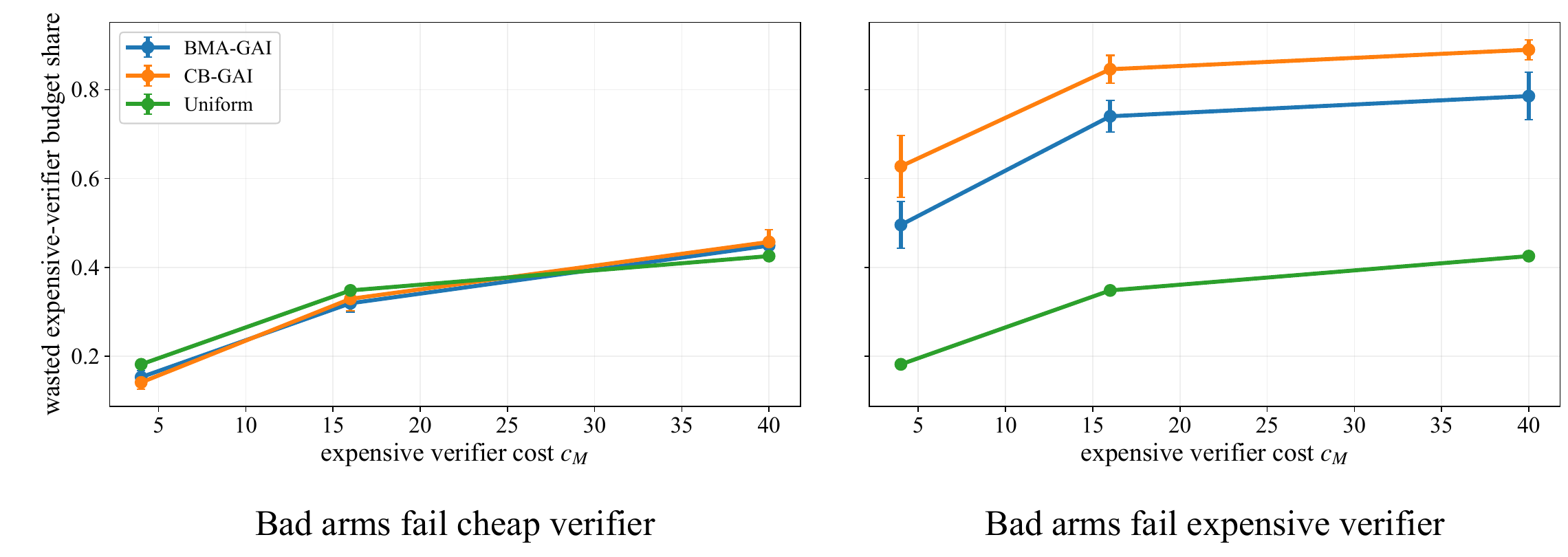}
    \caption{Cost--difficulty interaction. The vertical axis shows the cost of expensive-verifier calls on bad arms as a fraction of the total budget; error bars denote standard errors over eight repetitions. The left and right panels differ in which attribute reveals that a bad arm is bad.}
    \label{fig:app-cost-difficulty-interaction}
\end{figure}

\paragraph{Interaction between cost and difficulty.}
This experiment examines whether the benefit of cost-aware allocation depends on \emph{where} the evidence for rejecting a bad arm is concentrated. Both instance families contain $Q=12$ arms and $M=4$ attributes, with six good arms sharing the same mean vectors and verifier costs $(1,2,4,c_M)$, where $c_M\in\{4,16,40\}$. In both families, every bad arm falls below the threshold by the same small gap. In the left panel, this gap occurs on the cheapest attribute, so bad arms can be rejected with inexpensive evidence. In the right panel, the gap occurs only on the fourth attribute: the cheaper attributes make the bad arms look promising, and rejecting them requires costly observations. For each instance, we set $B=12T_{|\mathcal G|}^*(\mu)\log(1/\delta)$ with $\delta=0.02$ and average the results over eight independent repetitions. The reported quantity is the fraction of the total budget $B$ spent on applying the fourth verifier to truly bad arms.

When bad arms can be rejected with the cheapest verifier, \textsc{BMA-GAI} and \textsc{CB-GAI} spend similar fractions of the budget on expensive verification of bad arms, even as the cost of the fourth verifier increases. In contrast, when bad arms can be ruled out only through the expensive verifier, \textsc{CB-GAI} consistently spends a larger fraction of the budget on costly bad-arm observations than \textsc{BMA-GAI}, and the gap widens as $c_M$ increases. This more efficient allocation also translates into substantially more certified good arms for \textsc{BMA-GAI} in the high-cost regime. Although Uniform may spend a smaller fraction on expensive verification of bad arms, it fails to certify good arms under the same budget, showing that a small fraction alone does not indicate effective allocation. Together, the two panels show that cost awareness is most beneficial when costly verification coincides with the main source of identification difficulty.

\section{Proofs for \textsc{BMA-GAI}}
\label{app:track-proofs}

Throughout the appendix, write
\[
    D=QM,\quad
    c_{\min}=\min_m c_m,\quad
    c_{\max}=\max_m c_m.
\]
In both coverage theorems, $b>0$ is fixed and
$B_\delta=b\log(1/\delta)$.
Define the certificate radius
\begin{equation*}
    r(n,\delta)
    =\sqrt{\frac{2}{n}\log\frac{4Dn^2}{\delta}},
    \quad n\ge1.
\end{equation*}
For any execution, let
$\sigma_{q,m}(n)=\inf\{t:N_{q,m}(t)=n\}$, with $\inf\varnothing=\infty$.
On $\{\sigma_{q,m}(n)<\infty\}$, write
$\bar X_{q,m}(n)$ for the empirical mean of its first $n$ observations
of pair $(q,m)$. Unlike $\widehat\mu_{q,m}(t)$, this notation is indexed
by the local sample count. The sub-Gaussian proofs never require an
empirical mean at an unattained local count.

\subsection{Proof of Proposition~\ref{thm:track-safety}}

\begin{proof}
Fix $B>0$. For a pair $(q,m)$, let
$I_s^{q,m}=\mathbf 1\{(q_s,m_s)=(q,m)\}$ and
\[
    W_{q,m}(t)
    =\sum_{s=1}^t I_s^{q,m}(Y_s-\mu_{q,m}).
\]
For every fixed $\lambda\in\mathbb R$, the conditional moment assumption
implies that
\[
    Z_{q,m}^{\lambda}(t)
    =\exp\!\left(
        \lambda W_{q,m}(t)-\frac{\lambda^2}{2}N_{q,m}(t)
    \right)
\]
is a nonnegative supermartingale starting at one; after termination,
we keep this process constant. Optional stopping at
$\sigma_{q,m}(n)\wedge T$, followed by Fatou's lemma as $T\to\infty$,
gives
\[
    \mathbb E_\mu\!\left[
        \mathbf 1\{\sigma_{q,m}(n)<\infty\}
        \exp\!\left(
            \lambda n(\bar X_{q,m}(n)-\mu_{q,m})
            -\frac{\lambda^2n}{2}
        \right)
    \right]\le1.
\]
Optimizing the resulting exponential bound at $\lambda=x$ or $-x$
yields, for either choice of sign,
\begin{equation}
    \mathbb P_\mu\!\left(
        \sigma_{q,m}(n)<\infty,\;
        \pm(\bar X_{q,m}(n)-\mu_{q,m})\ge x
    \right)
    \le e^{-nx^2/2},\quad x>0.
    \label{eq:app-track-local-concentration}
\end{equation}
This argument applies to any admissible adaptive execution, not only to
the finite-budget algorithm.

Taking $x=r(n,\delta)$ and combining the two tails gives
\begin{equation*}
\begin{aligned}
&\mathbb P_\mu\!\left(
    \sigma_{q,m}(n)<\infty,\;
    |\bar X_{q,m}(n)-\mu_{q,m}|>r(n,\delta)
\right)
\\
&\quad\le 2e^{-nr^2(n,\delta)/2}
=\frac{\delta}{2Dn^2}.
\end{aligned}
\end{equation*}
Note that the event includes attainment of the local count; the
probability is not conditioned on attainment. A union bound over all
pairs and counts therefore gives
\begin{equation*}
\begin{aligned}
&\mathbb P_\mu\!\left(
    \exists(q,m),n\ge1:\ \sigma_{q,m}(n)<\infty,\;
    |\bar X_{q,m}(n)-\mu_{q,m}|>r(n,\delta)
\right)
\\
&\quad\le D\sum_{n=1}^{\infty}\frac{\delta}{2Dn^2}
=\frac{\pi^2}{12}\delta\le\delta.
\end{aligned}
\end{equation*}
On the complementary event, every attained raw lower bound is at most
$\mu_{q,m}$. Since $\mu_{q,m}\ge0$, the running maximum defining
$L_{q,m}(t)$, including its initial value zero, also satisfies
$L_{q,m}(t)\le\mu_{q,m}$. Thus, certification of $q$ implies
$\mu_{q,m}\ge\xi_m$ for every $m$, and hence $q\in\mathcal G$.
This holds simultaneously at all times up to termination and proves
Eq.~(\ref{eq:track-safety}).

Every sampling branch selects an attribute from the current set
$\mathcal M_t=\{m:C_t+c_m\le B\}$. In the ordinary branch, the
operational-index convention in Eq.~(\ref{eq:track-cost-interval})
ensures that the selected target has a nonempty affordable unresolved
set. Consequently, every executed observation satisfies
$C_{t+1}=C_t+c_{m_{t+1}}\le B$. Induction gives $C_{\TS}\le B$ almost
surely, and positivity of the costs gives
$\TS\le\lfloor B/c_{\min}\rfloor$.

Storing counts, empirical means, and certificates uses $O(QM)$ memory.
Direct scans for exploration, cost indices, selection, and certification
use $O(QM)$ operations per observation, excluding verifier execution.
\end{proof}

\subsection{Proof of Theorem~\ref{thm:track-coverage-achievable}}

Throughout this subsection, we assume only the conditional
$1$-sub-Gaussian observation model and a fixed nonboundary mean matrix,
i.e., $\mu_{q,m}\neq\xi_m$ for all $(q,m)$. Neither Gaussianity nor
independence across observations is required.

\paragraph{Reference process.}
Consider the action rule obtained from Algorithm~\ref{alg:bgai-track} by
setting $\mathcal M_t=[M]$ at every decision, with all other rules
unchanged. We refer to an execution of this rule as a budget-free
reference process. For every active arm, $\mathcal U_q(t)\ne\varnothing$, so its
operational cost indices agree with $H(u_q(t))$ and $H(\ell_q(t))$.
Define
\[
    \tau_\ell(\delta)=\inf\{t:|\mathcal S_t|\ge\ell\},\quad
    \mathcal C_\ell(\delta)=C_{\tau_\ell(\delta)},
\]
with $\mathcal C_\ell(\delta)=\infty$ when $\tau_\ell(\delta)=\infty$.
All probability estimates below hold uniformly over the conditional
observation laws satisfying Eq.~(\ref{eq:subgaussian-verifier}) with
these means. In particular, they do not require different confidence
levels to share a common observation stream.

\begin{lemma}[Sparse-exploration count]
\label{lem:track-exploration-count}
Let $F(t)$ count the forced-exploration observations among the first $t$
observations of the reference process. Then
\begin{equation}
    F(t)\le D\left\lceil\frac{\sqrt t}{D}\right\rceil
    \le\sqrt t+D.
    \label{eq:app-track-exploration-count}
\end{equation}
For $t\ge t_D:=16(D+1)^2$, every arm still active after observation $t$
satisfies
\begin{equation*}
    N_{q,m}(t)
    \ge\left\lfloor\frac{\sqrt{t/2}}{D}\right\rfloor
    \ge\frac{\sqrt t}{3D},\quad m\in[M].
\end{equation*}
\end{lemma}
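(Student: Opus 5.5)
The plan is a direct counting argument on the forced-exploration branch, Eq.~(\ref{eq:track-exploration}). In the reference process $\mathcal M_s=[M]$ for every $s$, so that branch selects a least-sampled pair in $\mathcal Q_s\times[M]$. Two facts do most of the work. First, pair counts are nondecreasing and increase by one exactly when the pair is pulled. Second, the active sets are nested, $\mathcal Q_{s+1}\subseteq\mathcal Q_s$, because arms only leave through certification. I count only observations produced by branch (i); the fallback branch is not needed for either claim.

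\emph{First inequality.} Fix $t$ and set $k=\lceil\sqrt t/D\rceil$. Suppose observation $s+1\le t$ is forced. The selected pair $(q,m)$ attains the minimum count, so $N_{q,m}(s)<\sqrt{s+1}/D\le\sqrt t/D\le k$, i.e.\ its count before the pull is at most $k-1$. For a fixed pair, the forced pulls occur at strictly increasing pre-pull counts, all lying in $\{0,\ldots,k-1\}$. Hence each pair receives at most $k$ forced pulls. Summing over the $D$ pairs gives $F(t)\le Dk\le D(\sqrt t/D+1)=\sqrt t+D$.

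\emph{Second inequality.} Take $t\ge t_D$, an arm $q$ active after observation $t$, and set $n=\lfloor\sqrt{t/2}/D\rfloor$. Suppose, for contradiction, that $N_{q,m}(t)\le n-1$ for some $m$. By nestedness, $q\in\mathcal Q_s$ for every $s\le t$, and by monotonicity $N_{q,m}(s)\le n-1$. For every $s\in[t/2,t)$ this gives
\[
\min_{q'\in\mathcal Q_s,\,m'}N_{q',m'}(s)\le N_{q,m}(s)<n\le\frac{\sqrt{t/2}}{D}\le\frac{\sqrt{s+1}}{D}.
\]
So every observation in this window, about $t/2$ of them, is forced. Each such pull is on a least-sampled pair, so its pre-pull count is at most $n-1$. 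By the same per-pair argument as in the first part, there are at most $Dn\le\sqrt{t/2}$ such pulls. But $t/2-1>\sqrt{t/2}$ once $t\ge t_D$, which is the contradiction.

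\emph{Final numerical step.} It remains to check $\lfloor\sqrt{t/2}/D\rfloor\ge\sqrt t/(3D)$. It suffices that $\sqrt t\,(1/\sqrt2-1/3)\ge D$. Since $t\ge t_D$ gives $\sqrt t\ge4(D+1)$, this holds.

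The main obstacle is the bookkeeping of time indices: the decision for round $s+1$ uses $N(s)$ against the threshold $\sqrt{s+1}/D$. I would handle it by working with pre-pull counts throughout, and I expect the integer rounding in the window $[t/2,t)$ to be routine given the generous constant in $t_D$.
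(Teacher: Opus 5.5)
Your proposal is correct and follows the paper's proof. Both arguments count forced pulls per pair through strictly increasing pre-pull counts, and both reach a contradiction over the last half of the horizon using nestedness of the active sets. The only cosmetic difference is that you bound the forced pulls in that window directly by $Dn\le\sqrt{t/2}$, whereas the paper simply invokes $F(t)\le\sqrt t+D$ against $\lfloor t/2\rfloor$.
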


\begin{proof}
When a pair is selected by forced exploration at observation $s\le t$,
its pre-draw count is smaller than $\sqrt{s}/D\le\sqrt t/D$.
Its count increases after every such selection, so it can be selected
this way at most $\lceil\sqrt t/D\rceil$ times. Summing over all $D$
pairs proves Eq.~(\ref{eq:app-track-exploration-count}).

Suppose an arm remains active at time $t$ but one of its pairs has count
smaller than $\lfloor\sqrt{t/2}/D\rfloor$. The arm was active at all
earlier times because certification is irreversible. At every observation
$s$ in the last half of the first $t$ observations, that pair's pre-draw
count is below $\sqrt{s}/D$. Each such observation would therefore be
forced exploration, implying $F(t)\ge\lfloor t/2\rfloor$.
For $t\ge t_D$, this contradicts
$\lfloor t/2\rfloor>\sqrt t+D$. The final floor bound follows from
$t\ge16(D+1)^2$.
\end{proof}

\begin{lemma}[High-probability stabilization]
\label{lem:track-subg-stabilization}
There exists a deterministic $t_*(\mu)<\infty$ such that, for every
integer $u\ge2$, there is an event $\mathcal A_u$ with
\begin{equation*}
    \mathbb P_\mu(\mathcal A_u^c)\le\frac{D}{(u-1)^2},
\end{equation*}
on which, at every non-exploration decision with
$t\ge\max\{u,t_*(\mu)\}$ at which some good arm is still active,
$\min_{q\in\mathcal Q_t}H_q^+(t)<\infty$ and
\begin{equation*}
    \mathcal P_t
    =\arg\min_{q\in\mathcal Q_t\cap\mathcal G}h_q.
\end{equation*}
\end{lemma}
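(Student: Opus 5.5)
The plan is to define $\mathcal A_u$ as the event that, from time $u$ on, every allocation interval in Eq.~(\ref{eq:track-allocation-interval}) contains its true mean. Deterministically, the sparse exploration of Lemma~\ref{lem:track-exploration-count} makes these intervals shrink. Once they are narrower than a fixed instance-dependent margin, the cost indices separate good arms from bad arms and also separate distinct values of $h_q$.

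\textbf{Step 1 (the event).} Let
\[
\mathcal A_u=\Bigl\{\forall (q,m),\ \forall t\ge u,\ \forall n\le t:\ \sigma_{q,m}(n)<\infty\Rightarrow |\bar X_{q,m}(n)-\mu_{q,m}|\le \sqrt{8\log(t+1)/n}\Bigr\}.
\]
At any time $t$, $\widehat\mu_{q,m}(t)=\bar X_{q,m}(N_{q,m}(t))$ and $N_{q,m}(t)\le t$. Hence on $\mathcal A_u$ we have $\ell_{q,m}(t)\le\mu_{q,m}\le u_{q,m}(t)$ for all $t\ge u$. (For $N=0$ this holds trivially, since the interval is $[0,1]$.)

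To bound $\mathbb P_\mu(\mathcal A_u^c)$, apply the local concentration bound Eq.~(\ref{eq:app-track-local-concentration}), which is valid for any adaptive execution and in particular for the reference process, with $x=\sqrt{8\log(t+1)/n}$. Each violation then has probability at most $2(t+1)^{-4}$. A union bound over $n\le t$ gives at most $2t(t+1)^{-4}\le 2(t+1)^{-3}$. Summing over $t\ge u$ gives at most $\int_{u-1}^\infty 2x^{-3}\,dx=(u-1)^{-2}$. A final union bound over the $D$ pairs yields $D/(u-1)^2$.

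\textbf{Step 2 (shrinking widths).} For $t\ge t_D$, Lemma~\ref{lem:track-exploration-count} gives $N_{q,m}(t)\ge\sqrt t/(3D)$ for every pair of every active arm. The interval width is therefore at most
\[
w(t)=2\sqrt{24D\log(t+1)/\sqrt t},
\]
which is deterministic and tends to $0$.

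Let $\Delta_{\min}=\min_{q,m}|\mu_{q,m}-\xi_m|>0$; this is positive by the nonboundary assumption. For each good arm, the map $z\mapsto H(z)$ is continuous near $\mu_q$ and satisfies $H(\mu_q)=h_q$. Let $\gamma$ be the smallest positive gap between distinct values in $\{h_q:q\in\mathcal G\}$, with $\gamma=\infty$ if all values coincide. Choose $\eta<\Delta_{\min}$ so that $\|z-\mu_q\|_\infty\le\eta$ implies $|H(z)-h_q|<\gamma/2$ for every good $q$. Then set $t_*(\mu)\ge t_D$ such that $w(t)\le\eta$ for all $t\ge t_*(\mu)$.

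\textbf{Step 3 (identifying $\mathcal P_t$).} Fix $t\ge\max\{u,t_*\}$ on $\mathcal A_u$. In the reference process $\mathcal M_t=[M]$, and every active arm has $\mathcal U_q(t)\neq\varnothing$, so the operational indices equal $H(u_q(t))$ and $H(\ell_q(t))$.

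\emph{Bad arms.} Suppose $q\notin\mathcal G$, so $\mu_{q,m}<\xi_m$ for some $m$. Then $u_{q,m}\le\mu_{q,m}+w<\xi_m$, so $H_q^-=H_q^+=\infty$ and $q\notin\mathcal P_t$.

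\emph{Good arms.} Suppose $q\in\mathcal G$. Then $\ell_{q,m}\ge\mu_{q,m}-w>\xi_m$ for every $m$. Since $H$ is coordinatewise decreasing on $\{z>\xi\}$, we get $H_q^-\le h_q\le H_q^+$, and both indices lie within $\gamma/2$ of $h_q$. In particular $H_q^+<\infty$ whenever a good arm is active.

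\emph{Comparison.} Let $h^\star=\min_{\mathcal Q_t\cap\mathcal G}h_q$. Then $\min_i H_i^+\ge h^\star$, because bad arms contribute $\infty$. It follows that:
\begin{itemize}
\item every good minimizer satisfies $H_q^-\le h^\star\le\min_i H_i^+$, so it lies in $\mathcal P_t$;
\item any good arm with $h_q\ge h^\star+\gamma$ has $H_q^->h^\star+\gamma/2>H^+_{q^\star}\ge\min_iH_i^+$, so it is excluded.
\end{itemize}
This gives $\mathcal P_t=\arg\min_{\mathcal Q_t\cap\mathcal G}h_q$.

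\textbf{Main obstacle.} The delicate step is Step 1. The empirical mean is indexed by the random, adaptively chosen count, and the radius depends on global time $t$ rather than on $n$. Passing through the local-count statement Eq.~(\ref{eq:app-track-local-concentration}) and taking a union bound over $n\le t$ handles this. A secondary check is that the count lower bound is needed only for active arms, since only they enter $\mathcal P_t$; Lemma~\ref{lem:track-exploration-count} supplies exactly that.
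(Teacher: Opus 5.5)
Your proposal is correct and follows essentially the same route as the paper. It uses the same event $\mathcal A_u$, built from the local-count concentration bound with radius $\sqrt{8\log(t+1)/n}$ and union bounds over $n\le t$, $t\ge u$, and the $D$ pairs. It then uses the deterministic shrinkage of the allocation intervals from the sparse-exploration count, followed by the same separation argument on the good-arm complexities. The differences are cosmetic: your event conditions on $\sigma_{q,m}(n)<\infty$ rather than $\sigma_{q,m}(n)\le t$, which is a stronger event with the same probability bound, and you use a $\gamma/2$ continuity margin where the paper uses $\rho/3$.
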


\begin{proof}
Let $\mathcal A_u$ be the event that
\begin{equation*}
    |\bar X_{q,m}(n)-\mu_{q,m}|
    \le\sqrt{\frac{8\log(t+1)}{n}}
\end{equation*}
for every integer $t\ge u$, every pair, and every $1\le n\le t$
satisfying $\sigma_{q,m}(n)\le t$. By
Eq.~(\ref{eq:app-track-local-concentration}),
\begin{equation*}
\begin{aligned}
    \mathbb P_\mu(\mathcal A_u^c)
    \le2D\sum_{t=u}^{\infty}\frac{t}{(t+1)^4}
    \le2D\sum_{t=u}^{\infty}t^{-3}
    \le\frac{D}{(u-1)^2}.
\end{aligned}
\end{equation*}
On $\mathcal A_u$, every active pair with $t\ge\max\{u,t_D\}$ has
allocation radius at most
\begin{equation*}
    e(t)=\sqrt{\frac{24D\log(t+1)}{\sqrt t}}
    \longrightarrow0,
\end{equation*}
by Lemma~\ref{lem:track-exploration-count}. Its allocation interval
contains the mean, and both endpoints lie within $2e(t)$ of the mean.

Since all threshold gaps are nonzero, eventually every active good arm
has all lower endpoints strictly above the thresholds, whereas every
active bad arm has an upper endpoint strictly below its threshold on at
least one attribute. Thus bad arms have $H_q^-=H_q^+=\infty$, and good
arms satisfy $H_q^-\le h_q\le H_q^+$, with both endpoints converging
uniformly to $h_q$.

If there are distinct good-arm complexities, let $\rho>0$ be their
smallest positive separation. Choose a deterministic time large enough
such that both cost endpoints of every good arm differ from $h_q$ by less
than $\rho/3$. An arm with complexity larger than the smallest remaining
complexity then has its optimistic cost strictly above a conservative
cost of a smallest-complexity arm, and is excluded from $\mathcal P_t$.
Every arm tied at the smallest remaining complexity $h_{\min}$ remains
in $\mathcal P_t$, since
$H_q^-\le h_{\min}\le\min_{i\in\mathcal Q_t}H_i^+$.
If all good-arm complexities are equal, only this latter argument is
needed. These bounds apply to every active subset. Taking $t_*(\mu)$
to include these deterministic requirements proves the claim. Fixed-index
tie-breaking selects the first remaining good arm in the order $(h_q,q)$.
\end{proof}

For a good pair, define the first crossing among its attained counts:
\begin{equation*}
    n_{q,m}^{\dagger}(\delta)
    =\inf\left\{n\ge1:
        \sigma_{q,m}(n)<\infty,\;
        \bar X_{q,m}(n)-r(n,\delta)\ge\xi_m
    \right\}.
\end{equation*}
The value is $\infty$ if the set is empty.

\begin{lemma}[High-probability one-attribute certification]
\label{lem:track-subg-scalar}
For each good pair and fixed $\eta>0$,
\begin{equation*}
    \mathbb P_\mu\!\left(
        n_{q,m}^{\dagger}(\delta)>
        \frac{2(1+\eta)}{\Delta_{q,m}^2}\log(1/\delta)
    \right)\longrightarrow0.
\end{equation*}
Consequently,
\begin{equation}
    \mathbb P_\mu\!\left(
        n_{q,m}^{\dagger}(\delta)\le
        \frac{2(1+\eta)}{\Delta_{q,m}^2}\log(1/\delta),
        \quad\forall q\in\mathcal G,\ m\in[M]
    \right)\longrightarrow1.
    \label{eq:app-track-subg-scalar-simultaneous}
\end{equation}
\end{lemma}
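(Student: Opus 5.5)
The plan is to show that, in the budget-free reference process, the event $\{n^\dagger_{q,m}(\delta)>n_0\}$ with $n_0=n_0(\delta)=\lceil 2(1+\eta)\log(1/\delta)/\Delta_{q,m}^2\rceil$ forces a single large deviation of the empirical mean at the deterministic local count $n_0$. That deviation is then controlled by the local concentration inequality Eq.~(\ref{eq:app-track-local-concentration}). The simultaneous statement Eq.~(\ref{eq:app-track-subg-scalar-simultaneous}) follows from a union bound over the at most $D$ good pairs, which is a fixed finite number. The rounding $\lceil\cdot\rceil$ is harmless: the event with threshold $n_0$ is contained in the event with the unrounded threshold up to one sample, and this can be absorbed by shrinking $\eta$ slightly.

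\emph{Step 1 (attainment reduction).} Because $n^\dagger$ is an infimum over \emph{attained} counts only, I first rule out the possibility that the count $n_0$ is simply never reached. I distinguish two cases.
\begin{itemize}
\item[(a)] Arm $q$ is eventually certified while $N_{q,m}<n_0$. Certification requires $L_{q,m}\ge\xi_m>0=L_{q,m}(0)$. Since $L_{q,m}$ is a running maximum of raw lower bounds at attained counts, some attained $n<n_0$ must satisfy $\bar X_{q,m}(n)-r(n,\delta)\ge\xi_m$. Hence $n^\dagger\le n_0$.
\item[(b)] Arm $q$ stays active. Then the reference process keeps running, because its active set is nonempty and $\mathcal M_t=[M]$. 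By Lemma~\ref{lem:track-exploration-count}, $N_{q,m}(t)\ge\sqrt t/(3D)\to\infty$, so $\sigma_{q,m}(n_0)<\infty$.
\end{itemize}
In either case,
\[
\{n^\dagger>n_0\}\subseteq\{\sigma_{q,m}(n_0)<\infty,\ \bar X_{q,m}(n_0)-r(n_0,\delta)<\xi_m\}.
\]

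\emph{Step 2 (radius asymptotics).} Writing $\Lambda=\log(1/\delta)$, we have
\[
r^2(n_0,\delta)=\frac{2}{n_0}\bigl(\Lambda+\log(4Dn_0^2)\bigr)\le\frac{\Delta_{q,m}^2}{1+\eta}\cdot\frac{\Lambda+O(\log\Lambda)}{\Lambda}.
\]
Hence $r(n_0,\delta)\le \Delta_{q,m}/\sqrt{1+\eta/2}$ for all sufficiently small $\delta$. Setting $\kappa=1-(1+\eta/2)^{-1/2}>0$, this gives $\Delta_{q,m}-r(n_0,\delta)\ge\kappa\Delta_{q,m}$.

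\emph{Step 3 (concentration).} On the event from Step~1,
\[
\bar X_{q,m}(n_0)-\mu_{q,m}<r(n_0,\delta)-\Delta_{q,m}\le-\kappa\Delta_{q,m}.
\]
Applying Eq.~(\ref{eq:app-track-local-concentration}) with the minus sign, $n=n_0$ and $x=\kappa\Delta_{q,m}$ bounds the probability of this event by $\exp(-n_0\kappa^2\Delta_{q,m}^2/2)\le\delta^{(1+\eta)\kappa^2}\to0$. This bound holds uniformly over admissible conditional sub-Gaussian laws, and no stopping-time subtlety arises because Eq.~(\ref{eq:app-track-local-concentration}) already includes the attainment event.

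The main obstacle is Step~1, namely the fact that $n^\dagger=\infty$ is possible when the local count is never reached. It is resolved by using irreversibility of certification together with the sparse-exploration count in the budget-free reference process. Steps~2--3 are routine once $n_0$ is fixed deterministically, so that the union bound over counts used for safety is not needed here.
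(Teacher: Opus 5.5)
Your proof is correct and follows the paper's argument essentially step for step. Both arguments obtain attainment of a deterministic local count from irreversibility of certification and the sparse-exploration lower bound of Lemma~\ref{lem:track-exploration-count}, then use the radius limit $r(n,\delta)\to\Delta_{q,m}/\sqrt{1+\eta}$, the one-sided local concentration bound at that count, and a union bound over the finitely many good pairs. The only difference is cosmetic: the paper takes $n_\delta=\lfloor 2(1+\eta)\log(1/\delta)/\Delta_{q,m}^2\rfloor$, so integer-valuedness of $n^\dagger_{q,m}(\delta)$ makes $\{n^\dagger>n_\delta\}$ coincide exactly with the target event, whereas your ceiling choice needs the extra (valid) step of rerunning the argument with a smaller $\eta$.
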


\begin{proof}
Fix a good pair and abbreviate its mean, threshold, and gap by
$\nu$, $\xi$, and $\Delta=\nu-\xi>0$.
Set
\[
    n_\delta=
    \left\lfloor
        \frac{2(1+\eta)}{\Delta^2}\log(1/\delta)
    \right\rfloor.
\]
For sufficiently small $\delta$, $n_\delta\ge1$ and
\[
    r(n_\delta,\delta)\longrightarrow
    \frac{\Delta}{\sqrt{1+\eta}}<\Delta.
\]
Choose a fixed $a_\eta>0$ such that, for all sufficiently small $\delta$,
$r(n_\delta,\delta)\le\Delta-a_\eta$.

On the event $\{n^\dagger(\delta)>n_\delta\}$, the local count
$n_\delta$ must be attained. Otherwise, this attribute never crosses its
threshold, its arm is never certified, and the reference process keeps
that arm active forever; the count lower bound in
Lemma~\ref{lem:track-exploration-count} then contradicts nonattainment
of $n_\delta$. At the attained count, absence of a crossing gives
\[
    \bar X(n_\delta)-\nu
    <r(n_\delta,\delta)-\Delta\le-a_\eta.
\]
Therefore Eq.~(\ref{eq:app-track-local-concentration}) implies
\[
    \mathbb P_\mu(n^\dagger(\delta)>n_\delta)
    \le e^{-n_\delta a_\eta^2/2}\longrightarrow0.
\]
Since $n^\dagger$ is integer-valued or infinite, the event on the left
is exactly
$\{n^\dagger>2(1+\eta)\log(1/\delta)/\Delta^2\}$. A union bound over the
finitely many good pairs proves the simultaneous statement.
\end{proof}

\begin{proposition}[Reference-process completion upper bound]
\label{prop:track-reference-upper}
For every fixed $\alpha\in(0,1)$, under the conditional sub-Gaussian model,
\begin{equation}
    \mathbb P_\mu\!\left(
        \mathcal C_\ell(\delta)
        \le(1+\alpha)T_\ell^*(\mu)\log(1/\delta),
        \quad\forall\ell\in\{1,\ldots,|\mathcal G|\}
    \right)\longrightarrow1.
    \label{eq:app-track-reference-upper}
\end{equation}
The convergence is uniform over admissible conditional observation laws
with the fixed mean matrix.
\end{proposition}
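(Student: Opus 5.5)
We will work on the intersection of three high-probability events and show that on it the reference process certifies good arms in the order $(h_q,q)$, with cumulative cost close to the cost of sampling each attribute of each certified arm about $2\log(1/\delta)/\Delta_{q,m}^2$ times. Fix $\alpha$ and pick $\eta>0$ small (to be tuned so that $(1+\eta)^2\le 1+\alpha/2$, say). The three events are:
\begin{itemize}
\item the safety event of Proposition~\ref{thm:track-safety} (probability $\ge1-\delta$), on which no bad arm is ever certified;
\item the stabilization event $\mathcal A_u$ of Lemma~\ref{lem:track-subg-stabilization}, with $u=u(\delta)=\lceil\sqrt{\log(1/\delta)}\,\rceil$, so that $u\to\infty$ but $u=o(\log(1/\delta))$;
\item the simultaneous one-attribute event in Eq.~(\ref{eq:app-track-subg-scalar-simultaneous}).
\end{itemize}
All three have probability tending to one, uniformly over admissible laws, because each bound depends only on the means.

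\textbf{Step 1 (burn-in).} Let $t_0=\max\{u,t_*(\mu),t_D\}=o(\log(1/\delta))$. The total cost spent before time $t_0$ is at most $c_{\max}t_0=o(\log(1/\delta))$.

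\textbf{Step 2 (non-exploration rounds after $t_0$).} On $\mathcal A_u$, every non-exploration round after $t_0$ targets the currently active good arm $q^\circ$ that is first in the order $(h_q,q)$. Eq.~(\ref{eq:track-attribute}) then picks an unresolved attribute minimizing $N_{q^\circ,m}/\widetilde v_{q^\circ,m}$.

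Once this attribute's lower certificate reaches $\xi_m$, it leaves $\mathcal U_{q^\circ}$, and $L$ is a running maximum, so it never re-enters. It therefore suffices to count the samples each attribute receives before its first crossing. This count is bounded by $n^\dagger_{q,m}(\delta)\le 2(1+\eta)\log(1/\delta)/\Delta_{q,m}^2$ plus one. The running-max definition guarantees that crossing at the attained local count $n^\dagger$ certifies that attribute.

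The cost attributable to non-exploration samples of arm $q$ is then at most $\sum_m c_m\bigl(2(1+\eta)\log(1/\delta)/\Delta_{q,m}^2+1\bigr)=(1+\eta)h_q\log(1/\delta)+O(1)$. Note that no tracking-ratio argument is needed: we bound each attribute by its own crossing count, irrespective of the order in which attributes are visited.

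\textbf{Step 3 (summing over the first $\ell$ certified arms).} The arms certified first are exactly the $\ell$ good arms of smallest $h_q$, up to ties, which have equal $h_q$. This holds because:
\begin{itemize}
\item targeting is always the minimal remaining arm;
\item on the safety event only good arms are ever certified;
\item other arms may be certified incidentally, but only via their own samples, which we account for separately.
\end{itemize}
Hence by the time $|\mathcal S_t|\ge\ell$, targeted samples cost at most $(1+\eta)T_\ell^*(\mu)\log(1/\delta)+O(\ell)$.

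\textbf{Step 4 (controlling the remaining cost).} There are two remaining contributions.
\begin{itemize}
\item \emph{Exploration samples.} By Lemma~\ref{lem:track-exploration-count}, these number at most $\sqrt{t}+D$. Since $t$ itself is at most $\mathcal C_\ell/c_{\min}$, exploration contributes $O(\sqrt{\mathcal C_\ell})$. Solving the resulting inequality $\mathcal C\le (1+\eta)T_\ell^*\log(1/\delta)+c_{\max}\sqrt{\mathcal C/c_{\min}}+o(\log(1/\delta))$ gives $\mathcal C_\ell\le(1+\alpha)T_\ell^*\log(1/\delta)$ for small $\delta$.
\item \emph{Fallback samples} ($\min H^+=\infty$) after $t_0$. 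Lemma~\ref{lem:track-subg-stabilization} excludes these while a good arm is active, so they do not occur before $\tau_\ell$.
\end{itemize}
Finiteness of $\tau_\ell$ follows because the targeted arm's unresolved attributes keep being sampled, either as targets or by forced exploration.

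\textbf{Main obstacle.} We expect the delicate step to be Step 3's claim that non-targeted good arms do not consume budget beyond exploration. They receive samples only through exploration, but they may become targets later with counts already accumulated. This is harmless, since Step 2's per-attribute bound is on total counts regardless of the source; we need only avoid double-counting exploration samples. A further subtlety is the finite burn-in before stabilization, during which arbitrary targets may be sampled. Its cost is $O(t_0)=o(\log(1/\delta))$ by the choice of $u(\delta)$. We would double-check that $t_*(\mu)$ is deterministic and independent of $\delta$, which Lemma~\ref{lem:track-subg-stabilization} asserts.
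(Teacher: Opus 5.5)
Your proposal is correct and follows essentially the same route as the paper. Both arguments charge an $o(\log(1/\delta))$ burn-in at cost $c_{\max}$ per round, bound the ordinary draws on each pair of a targeted arm by its first-crossing count $n^\dagger_{q,m}(\delta)$ summed over the $\ell$ good arms first in the order $(h_q,q)$, bound forced exploration by $\sqrt{t}+D$, and solve the resulting inequality $C\le W+(c_{\max}/\sqrt{c_{\min}})\sqrt{C}$. One statement is inaccurate: the Step~3 claim that the first $\ell$ certified arms are exactly those $\ell$ arms fails, because incidental certifications can intervene. The claim is also unnecessary. While fewer than $\ell$ arms are certified, one of those $\ell$ arms is still active, so every ordinary target lies among them. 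For the same reason the safety event is superfluous, since extra (even erroneous) certifications only make $\tau_\ell$ earlier.
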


\begin{proof}
If $|\mathcal G|=0$, the statement is vacuous. Otherwise set
$\eta=\alpha/4$ and
$u_\delta=\max\{2,\lceil[\log(1/\delta)]^{1/3}\rceil\}$.
Order the good arms by $(h_q,q)$ as
$q_{(1)},\ldots,q_{(|\mathcal G|)}$, and set
$S_\ell^*=\{q_{(1)},\ldots,q_{(\ell)}\}$.
Then $\sum_{q\in S_\ell^*}h_q=T_\ell^*(\mu)$ by
Eq.~(\ref{eq:track-characteristic-closed}).
Let $\mathcal B_\delta$ be the event in
Eq.~(\ref{eq:app-track-subg-scalar-simultaneous}). The preceding lemmas give
\begin{equation*}
    \mathbb P_\mu(\mathcal A_{u_\delta}\cap\mathcal B_\delta)
    \longrightarrow1.
\end{equation*}
For sufficiently small $\delta$, $u_\delta\ge t_*(\mu)$.
On this intersection, define
\[
    A_{\ell,\delta}^{\dagger}
    =\sum_{q\in S_\ell^*}\sum_{m=1}^M
        c_m n_{q,m}^{\dagger}(\delta).
\]
Then
\begin{equation*}
    A_{\ell,\delta}^{\dagger}
    \le(1+\eta)T_\ell^*(\mu)\log(1/\delta),
    \quad 1\le\ell\le|\mathcal G|.
\end{equation*}

We charge the first $u_\delta$ observations at cost at most
$c_{\max}u_\delta$. Subsequently, while fewer than $\ell$ outputs have
occurred, every ordinary target belongs to $S_\ell^*$. Indeed, stable
selection of a good arm with rank larger than $\ell$ would require all
of the first $\ell$ good arms to have already been certified. This
argument remains valid even if erroneous certifications occur earlier,
because arms are removed only through certification.

An ordinary draw samples an unresolved attribute, whose pre-draw count
must be smaller than $n_{q,m}^{\dagger}(\delta)$. Once this count has
been reached, the running certificate stays above threshold. The total
post-$u_\delta$ ordinary cost is therefore at most
$A_{\ell,\delta}^{\dagger}$. Stabilization also excludes the fallback
branch while a good arm remains active. All other draws are forced
exploration. Thus every prefix up to and including the observation
producing the $\ell$th output satisfies
\begin{equation*}
    C(t)\le A_{\ell,\delta}^{\dagger}
        +c_{\max}(u_\delta+D)+c_{\max}\sqrt t.
\end{equation*}
Writing
\[
    W_{\ell,\delta}
    =A_{\ell,\delta}^{\dagger}+c_{\max}(u_\delta+D),
    \quad \kappa=\frac{c_{\max}}{\sqrt{c_{\min}}},
\]
and using $C(t)\ge c_{\min}t$, we obtain
$C(t)\le W_{\ell,\delta}+\kappa\sqrt{C(t)}$. Solving this inequality gives
\begin{equation}
\begin{aligned}
    C(t)
    &\le W_{\ell,\delta}+\frac{\kappa^2}{2}
       +\frac{\kappa}{2}\sqrt{\kappa^2+4W_{\ell,\delta}}\\
    &\le W_{\ell,\delta}+\kappa^2
       +\kappa\sqrt{W_{\ell,\delta}}.
\end{aligned}
\label{eq:app-track-subg-quadratic-cost}
\end{equation}
The $\ell$th output must occur on the event under consideration;
otherwise, a good arm remains active forever, so the reference process
keeps sampling and $C(t)\ge c_{\min}t\to\infty$, contradicting
Eq.~(\ref{eq:app-track-subg-quadratic-cost}).

Because $u_\delta=o(\log(1/\delta))$, the deterministic bounds on this event
imply, simultaneously for the finitely many $\ell$,
\[
    W_{\ell,\delta}
    \le(1+\eta)T_\ell^*(\mu)\log(1/\delta)
       +o(\log(1/\delta)),
    \quad
    \kappa^2+\kappa\sqrt{W_{\ell,\delta}}
    =o(\log(1/\delta)).
\]
Since $\eta<\alpha$ and every $T_\ell^*(\mu)>0$ for $\ell\ge1$,
Eq.~(\ref{eq:app-track-reference-upper}) follows.
The bounds on $\mathbb P_\mu(\mathcal A_{u_\delta}^c)$ and
$\mathbb P_\mu(\mathcal B_\delta^c)$ depend only on $\delta$ and the
fixed instance parameters, not on the particular conditional observation
laws, which proves the asserted uniformity.
\end{proof}

\begin{lemma}[Budget coupling]
\label{lem:track-budget-coupling}
For each fixed $(B,\delta)$, a finite-budget execution and an admissible
reference execution can be coupled to agree on every reference prefix
whose cumulative cost is at most $B-c_{\max}$. In particular,
\begin{equation*}
    \mathcal C_\ell(\delta)\le B-c_{\max}
    \quad\Longrightarrow\quad
    |\widehat{\mathcal S}(B)|\ge\ell,
    \quad 1\le\ell\le|\mathcal G|.
\end{equation*}
\end{lemma}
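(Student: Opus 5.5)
The plan is to build both executions on one probability space so that, as long as their histories agree, they receive identical inputs. Then I show by induction that the finite-budget rule and the budget-free rule make identical decisions whenever the reference cost stays at most $B-c_{\max}$.

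\emph{Coupling construction.} Fix $(B,\delta)$ and an admissible conditional observation law. Since \eqref{eq:subgaussian-verifier} lets the law of $Y_{t+1}$ depend on $\mathcal F_t$, the observation kernel is $K(\cdot\mid h_t,(q,m))$, where $h_t$ is the history. I will draw one i.i.d.\ sequence of uniforms $U_1,U_2,\ldots$, plus a common source of internal randomization. Both executions generate $Y_{t+1}$ as the conditional quantile transform of $U_{t+1}$ under $K(\cdot\mid h_t,(q_{t+1},m_{t+1}))$. Each execution then has the correct marginal law. Moreover, whenever the two histories $h_t$ coincide and the same action is chosen, the two observations $Y_{t+1}$ coincide.

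\emph{Induction.} Let $t$ be a time with reference cost $C_t\le B-c_{\max}$. I claim the two executions have identical histories through time $t$, and that the finite-budget run has not stopped before $t$.

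The case $t=0$ is trivial. Suppose the claim holds up to some $s<t$. Since costs are nondecreasing, $C_s\le C_t\le B-c_{\max}$. Hence $C_s+c_m\le B$ for every $m$, so the finite-budget run has $\mathcal M_s=[M]$, exactly as in the reference rule. Consequently:
\begin{itemize}
\item the exploration test \eqref{eq:track-exploration} takes the same value in both runs;
\item the least-sampled sets agree;
\item the operational indices \eqref{eq:track-cost-interval} reduce to $(H(u_q),H(\ell_q))$ in both runs, because every active arm has $\mathcal U_q(s)\ne\varnothing$;
\item the target \eqref{eq:track-stable-target} and the attribute \eqref{eq:track-attribute}, with deterministic tie-breaking, agree.
\end{itemize}
Both runs therefore choose the same pair $(q_{s+1},m_{s+1})$, the common uniform yields the same $Y_{s+1}$, and the certificates, $\mathcal S_{s+1}$, $\mathcal Q_{s+1}$ and $C_{s+1}$ agree.

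The finite-budget run continues at $s$ because $\mathcal M_s\neq\varnothing$. It also has $\mathcal Q_s\neq\varnothing$, since the reference continued at $s$ and the two active sets agree. This completes the induction and gives agreement on every reference prefix of cost at most $B-c_{\max}$.

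\emph{Consequence.} Suppose $\mathcal C_\ell(\delta)=C_{\tau_\ell}\le B-c_{\max}$. Apply the induction with $t=\tau_\ell$. It shows that the finite-budget run executes exactly the reference's first $\tau_\ell$ observations, including the observation at $\tau_\ell$ itself, so it reaches $|\mathcal S_{\tau_\ell}|\ge\ell$. Since $\mathcal S_t$ only grows and $\widehat{\mathcal S}(B)=\mathcal S_{\TS}$ with $\TS\ge\tau_\ell$, we get $|\widehat{\mathcal S}(B)|\ge\ell$.

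The main obstacle is making the coupling rigorous for history-dependent observation laws. The rest of the argument is bookkeeping. I would handle it with the common-uniform quantile construction above, applied step by step. This keeps both marginals admissible, which is why Proposition~\ref{prop:track-reference-upper} was stated uniformly over admissible laws.
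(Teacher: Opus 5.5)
Your proposal is correct and follows essentially the paper's route. You synchronize the two executions while their histories agree, observe that $C_t\le B-c_{\max}$ forces $\mathcal M_t=[M]$ so every decision rule coincides with the budget-free one, and conclude by induction together with monotonicity of the certified set. The only difference is in how the coupling is realized: you use common uniforms under a single kernel, which tacitly requires that kernel to be admissible at every history, including those the reference visits after the finite-budget run stops; the paper instead feeds the finite-budget observation while synchronized and afterwards continues the reference with an arbitrary admissible law (e.g.\ deterministic means).
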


\begin{proof}
If $B<c_{\max}$, the displayed implication is vacuous. Otherwise, start
both executions with the same action--observation history and fixed
tie-breaking rules. Whenever their histories agree and
$C_t\le B-c_{\max}$, every attribute is affordable because
\[
    C_t+c_m\le C_t+c_{\max}\le B,\quad m\in[M].
\]
Thus $\mathcal M_t=[M]$ in the finite-budget execution. Its exploration,
cost-index, target-selection, and attribute-selection rules agree with
those of the reference process. Feeding both executions the same next
observation, drawn from the finite-budget experiment's conditional
feedback law, keeps their counts, means, certificates, and certified
sets in agreement.

We continue synchronously while the cost condition holds. Once it fails,
the reference process can be extended using any conditional observation
law satisfying Eq.~(\ref{eq:subgaussian-verifier}) with the same means;
for example, it may receive the selected pair's mean deterministically.
This is only an auxiliary probabilistic construction, not a change to
the algorithm. It requires neither independent streams nor a common
sample path for different budgets. Proposition~\ref{prop:track-reference-upper}
applies uniformly to such continuations.

Induction proves agreement on all the stated prefixes. If
$\mathcal C_\ell(\delta)\le B-c_{\max}$, the finite-budget execution
has made the same $\ell$ certifications by that point. Subsequent
budget-aware decisions cannot revoke them, proving the implication.
\end{proof}

\begin{proof}[Proof of Theorem~\ref{thm:track-coverage-achievable}]
Fix $b>0$ and $\varepsilon\in(0,1)$, and write
$k=\underline K_\varepsilon(b)$. If $k=0$, reliability alone proves
the claim. Suppose $k\ge1$. By definition,
\[
    d_b:=b-(1+\varepsilon)T_k^*(\mu)>0.
\]
For sufficiently small $\delta$,
$d_b\log(1/\delta)\ge c_{\max}$, so
\begin{equation}
    (1+\varepsilon)T_k^*(\mu)\log(1/\delta)
    \le b\log(1/\delta)-c_{\max}=B_\delta-c_{\max}.
    \label{eq:app-track-normalized-margin}
\end{equation}
Construct the coupled reference process from
Lemma~\ref{lem:track-budget-coupling}. By
Proposition~\ref{prop:track-reference-upper}, with $\alpha=\varepsilon$,
\[
    \mathbb P_\mu\!\left(
        \mathcal C_k(\delta)\le
        (1+\varepsilon)T_k^*(\mu)\log(1/\delta)
    \right)\longrightarrow1.
\]
Its uniformity allows the auxiliary continuation to depend on
$(B_\delta,\delta)$. Combining this estimate with
Eq.~(\ref{eq:app-track-normalized-margin}) and the coupling lemma gives
$\mathbb P_\mu(|\widehat{\mathcal S}(B_\delta)|\ge k)\to1$.
Finally,
\begin{align*}
&\mathbb P_\mu\!\left(
    \widehat{\mathcal S}(B_\delta)\subseteq\mathcal G,
    \ |\widehat{\mathcal S}(B_\delta)|\ge k
\right)\\
&\quad\ge1-\delta-
\mathbb P_\mu\!\left(
    \mathcal C_k(\delta)>
    (1+\varepsilon)T_k^*(\mu)\log(1/\delta)
\right)\longrightarrow1,
\end{align*}
by Proposition~\ref{thm:track-safety}. This proves the theorem.
\end{proof}

\subsection{Proof of Theorem~\ref{thm:track-coverage-converse}}

In this subsection, observations are independent unit-variance Gaussian
streams. They may be realized as independent arrays
$X_{q,m,n}\sim\mathcal N(\mu_{q,m},1)$, revealing $X_{q,m,n}$ at the
$n$th draw of $(q,m)$. This Gaussian construction is not used in the
sub-Gaussian achievability proof.

\paragraph{Variational characterization.}
For nonempty $S\subseteq\mathcal G$, let
$\operatorname{Alt}(S)=\{\lambda\in[0,1]^{Q\times M}:
S\nsubseteq\mathcal G(\lambda)\}$ and define
\begin{equation*}
    \Gamma(S)
    =\max_{w\in\Delta_D}\inf_{\lambda\in\operatorname{Alt}(S)}
    \sum_{q,m}\frac{w_{q,m}}{c_m}
    \frac{(\mu_{q,m}-\lambda_{q,m})^2}{2},
\end{equation*}
where $\Delta_D$ is the probability simplex of cost fractions.

\begin{lemma}[Variational characterization]
\label{lem:track-characteristic}
For every nonempty $S\subseteq\mathcal G$,
\begin{equation}
    \Gamma(S)=\frac{1}{\sum_{q\in S}h_q}.
    \label{eq:app-track-gamma-closed}
\end{equation}
Consequently, for $1\le\ell\le|\mathcal G|$,
\begin{equation*}
    \left[\max_{S\subseteq\mathcal G:\,|S|=\ell}\Gamma(S)\right]^{-1}
    =T_\ell^*(\mu).
\end{equation*}
\end{lemma}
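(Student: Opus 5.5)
The plan is to compute the inner infimum in closed form for a fixed weight vector and then maximize over the simplex explicitly.

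\emph{Inner infimum.} Fix $S\subseteq\mathcal G$ nonempty and $w\in\Delta_D$. The alternative set is a union:
\[
\operatorname{Alt}(S)=\bigcup_{q\in S,\,m\in[M]}\{\lambda:\lambda_{q,m}<\xi_m\}.
\]
The objective is separable across pairs, so on the piece indexed by $(q,m)$ it is optimal to set every other coordinate $\lambda_{q',m'}=\mu_{q',m'}$. This is admissible since $\mu\in[0,1]$. The coordinate $\lambda_{q,m}$ is then pushed to $\xi_m$; the infimum is approached but not attained, which is harmless, and $\xi_m\in(0,1)$ keeps $\lambda$ in $[0,1]^{Q\times M}$. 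This gives
\[
\inf_{\lambda\in\operatorname{Alt}(S)}\sum_{q,m}\frac{w_{q,m}}{c_m}\frac{(\mu_{q,m}-\lambda_{q,m})^2}{2}
=\min_{q\in S,\,m\in[M]}\frac{w_{q,m}\Delta_{q,m}^2}{2c_m}
=\min_{q\in S,\,m\in[M]}\frac{w_{q,m}}{h_{q,m}},
\]
using $\Delta_{q,m}>0$ for good arms.

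\emph{Outer maximization.} We maximize $\min_{(q,m)\in S\times[M]} w_{q,m}/h_{q,m}$ over the simplex.
\begin{itemize}
\item Any weight placed outside $S\times[M]$ is wasted, so we may assume $w$ is supported on $S\times[M]$.
\item If the minimum equals $\gamma$, then $w_{q,m}\ge\gamma h_{q,m}$ for every pair. Summing gives $1\ge\gamma\sum_{q\in S}h_q$, hence $\gamma\le 1/\sum_{q\in S}h_q$.
\item Equality is attained by the proportional allocation $w_{q,m}=h_{q,m}/\sum_{q'\in S}h_{q'}$ on $S\times[M]$ and zero elsewhere. This choice also shows the max over the compact simplex is attained, since the inner value is a continuous (min-of-linear) function of $w$.
\end{itemize}
Together these prove Eq.~\eqref{eq:app-track-gamma-closed}.

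\emph{Consequence.} Since $\Gamma(S)$ is decreasing in $\sum_{q\in S}h_q$, maximizing $\Gamma(S)$ over $|S|=\ell$ is the same as minimizing $\sum_{q\in S}h_q$. Inverting then gives $T_\ell^*(\mu)$ by Eq.~\eqref{eq:track-characteristic-closed}.

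The main care point is the first step. One must justify that the infimum over the union reduces to the best single-coordinate violation, since mixing violations across several coordinates only adds nonnegative terms. One must also note that the open constraint $\lambda_{q,m}<\xi_m$ yields an infimum rather than a minimum, which does not change the value.
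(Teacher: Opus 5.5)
Your proposal is correct and follows essentially the same route as the paper: you reduce the inner infimum to the best single-coordinate violation, which gives $\min_{q\in S,m\in[M]} w_{q,m}/h_{q,m}$, then bound it by summing $w_{q,m}\ge\gamma h_{q,m}$ over $S\times[M]$ and attain equality with the proportional allocation $w_{q,m}=h_{q,m}/\sum_{i\in S}h_i$. Your extra care points make explicit details the paper's proof uses only implicitly: fixing the other coordinates at $\mu$, the infimum being approached rather than attained as $\lambda_{q,m}\uparrow\xi_m$, and $\Delta_{q,m}>0$ following from the nonboundary assumption.
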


\begin{proof}
Every alternative invalidating $S$ has some $q\in S,m\in[M]$ with
$\lambda_{q,m}<\xi_m$. The information contributed by this coordinate
is at least $w_{q,m}\Delta_{q,m}^2/(2c_m)$. Conversely, changing just
one such coordinate and approaching $\xi_m$ from below attains this
value as an infimum. Since $\xi_m\in(0,1)$, these alternatives remain
in the mean domain. Therefore, for each fixed $w$,
\begin{equation*}
    \inf_{\lambda\in\operatorname{Alt}(S)}
    \sum_{q,m}\frac{w_{q,m}}{c_m}
        \frac{(\mu_{q,m}-\lambda_{q,m})^2}{2}
    =\min_{q\in S,m\in[M]}\frac{w_{q,m}}{h_{q,m}}.
\end{equation*}
If the minimum equals $z$, then $w_{q,m}\ge z h_{q,m}$ on $S$.
Summing gives $1\ge z\sum_{q\in S}h_q$. Equality is achieved by
$w_{q,m}=h_{q,m}/\sum_{i\in S}h_i$ on $S$, with zero weight elsewhere.
This proves Eq.~(\ref{eq:app-track-gamma-closed}). Maximizing over
size-$\ell$ sets selects the $\ell$ smallest good-arm complexities.
The corresponding within-arm cost fractions are $h_{q,m}/h_q$.
\end{proof}

\begin{lemma}[Necessary evidence for returning a good arm]
\label{lem:track-necessary-evidence}
Fix $\varepsilon\in(0,1)$. For all sufficiently small $\delta$,
uniformly over $B>0$, $\pi\in\Pi_\delta(B)$, $q\in\mathcal G$, and
$m\in[M]$,
\begin{equation*}
\begin{aligned}
&\mathbb P_{\mu,\pi}\!\left(
    q\in\widehat{\mathcal S}_\pi(B),\;
    N_{q,m}(\TS)<
    \frac{2(1-\varepsilon)}{\Delta_{q,m}^2}\log(1/\delta)
\right)\\
&\quad\le2\delta^{\varepsilon^2/64}.
\end{aligned}
\end{equation*}
The required upper bound on $\delta$ depends only on the fixed instance
and $\varepsilon$.
\end{lemma}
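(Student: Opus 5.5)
The plan is a change-of-measure argument on the single pair $(q,m)$, using the independent-array realization $X_{q,m,n}\sim\mathcal N(\mu_{q,m},1)$ described above. Fix $q\in\mathcal G$, $m\in[M]$, and a small $\gamma>0$ depending only on $\varepsilon$ and the instance, with $\gamma\le\xi_m$ so that the alternative stays in $[0,1]$. Define the alternative $\lambda$ by $\lambda_{q,m}=\xi_m-\gamma$ and $\lambda=\mu$ in every other coordinate. Then $q\notin\mathcal G(\lambda)$, and $\lambda$ is an admissible Gaussian instance. Any $\pi\in\Pi_\delta(B)$ is therefore $\delta$-reliable under $\lambda$, which gives
\[
\mathbb P_{\lambda,\pi}\bigl(q\in\widehat{\mathcal S}_\pi(B)\bigr)\le\delta .
\]
This holds for every $B$, so uniformity in $B$ and $\pi$ is automatic.

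Write $d=\mu_{q,m}-\lambda_{q,m}=\Delta_{q,m}+\gamma$ and $S_n=\sum_{k\le n}(X_{q,m,k}-\mu_{q,m})$. The two laws differ only in the $(q,m)$ stream, and the policy's internal randomization is common to both. Hence the log-likelihood ratio $\log(d\mathbb P_\mu/d\mathbb P_\lambda)$ restricted to $\mathcal F_{\TS}$ equals $\Lambda_{N}$, where $N=N_{q,m}(\TS)$ and
\[
\Lambda_n=\frac{nd^2}{2}+dS_n .
\]
Let $E=\{q\in\widehat{\mathcal S}_\pi(B),\,N<n_0\}$ with $n_0=2(1-\varepsilon)\log(1/\delta)/\Delta_{q,m}^2$, and fix a level $x$. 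Splitting $E$ according to whether $\Lambda_N\le x$ gives
\[
\mathbb P_\mu(E)\le e^{x}\,\mathbb P_\lambda(E)+\mathbb P_\mu\Bigl(\max_{n\le n_0}\Lambda_n> x\Bigr)\le e^{x}\delta+\mathbb P_\mu\Bigl(\max_{n\le n_0} dS_n> x-\tfrac{n_0d^2}{2}\Bigr).
\]

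Next I choose the constants. Take $\gamma$ small enough that
\[
\frac{n_0 d^2}{2}=(1-\varepsilon)\frac{(\Delta_{q,m}+\gamma)^2}{\Delta_{q,m}^2}\log(1/\delta)\le\Bigl(1-\tfrac{3\varepsilon}{4}\Bigr)\log(1/\delta),
\]
and set $x=(1-\varepsilon/2)\log(1/\delta)$. The first term then becomes $\delta^{\varepsilon/2}$. The gap in the second term is at least $(\varepsilon/4)\log(1/\delta)$.

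For the second term, $S_n$ is a Gaussian random walk with unit-variance increments. Doob's maximal inequality applied to $\exp(\theta S_n)$, optimized over $\theta$, gives
\[
\mathbb P_\mu\Bigl(\max_{n\le n_0}S_n\ge y\Bigr)\le \exp\Bigl(-\frac{y^2}{2n_0}\Bigr),\qquad y=\frac{\varepsilon\log(1/\delta)}{4d}.
\]
Substituting, the exponent is
\[
\frac{\varepsilon^2\log(1/\delta)\,\Delta_{q,m}^2}{64(1-\varepsilon)d^2}.
\]
Since $d/\Delta_{q,m}$ is as close to $1$ as desired and $1/(1-\varepsilon)\ge1$, this exponent is at least $(\varepsilon^2/64)\log(1/\delta)$ once $\gamma$ is small, so the second term is at most $\delta^{\varepsilon^2/64}$. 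Because $\varepsilon<1$, we have $\delta^{\varepsilon/2}\le\delta^{\varepsilon^2/64}$, and the two terms sum to at most $2\delta^{\varepsilon^2/64}$.

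The main obstacle is making the choice of $\gamma$ and the constant bookkeeping uniform. The required smallness of $\gamma$ depends on $\varepsilon$ and on $\min_{q\in\mathcal G,m}\Delta_{q,m}$ and $\min_m\xi_m$. The "sufficiently small $\delta$" condition is only needed so that $n_0\ge1$ and the floors are harmless, so it depends only on the instance and $\varepsilon$.

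A second point needs care: the likelihood-ratio identity on the stopped $\sigma$-field $\mathcal F_{\TS}$ when $\TS$ may be unbounded. I would handle it by truncating at $\TS\wedge T$ and letting $T\to\infty$. The budget constraint already gives $\TS\le B/c_{\min}$, so $\TS$ is bounded for fixed $B$, which sidesteps this issue.
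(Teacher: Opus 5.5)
Your proposal is correct and follows the paper's proof essentially step for step. The shared steps are the one-coordinate alternative $\lambda_{q,m}=\xi_m-\gamma$ just below the threshold, the change of measure on the budget-bounded transcript split at level $(1-\varepsilon/2)\log(1/\delta)$ with $n_0d^2/2\le(1-3\varepsilon/4)\log(1/\delta)$, and the Gaussian maximal inequality giving $\delta^{\varepsilon^2/64}$; the only cosmetic differences are that the paper fixes the shift explicitly as $u=\min\{\xi_m/2,\Delta_{q,m}(\sqrt{1+\varepsilon/4}-1)\}$ and works with the floored count $\lfloor n_0\rfloor$.
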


\begin{proof}
Fix a good pair, put $\nu=\mu_{q,m}$, $\xi=\xi_m$, and
$\Delta=\nu-\xi>0$. Choose
\[
    u=\min\left\{\frac{\xi}{2},\;
        \Delta\left(\sqrt{1+\varepsilon/4}-1\right)\right\}>0.
\]
Then $u<\xi$ and
\begin{equation}
    \frac{(\Delta+u)^2}{\Delta^2}\le1+\frac{\varepsilon}{4}.
    \label{eq:app-track-alt-choice}
\end{equation}
Define an alternative mean matrix $\lambda$ by changing only
$\lambda_{q,m}=\xi-u$. Arm $q$ is bad on this admissible alternative,
so uniform reliability gives
\begin{equation*}
    \mathbb P_{\lambda,\pi}(q\in\widehat{\mathcal S}_\pi(B))\le\delta.
\end{equation*}
Write $d=\Delta+u$ and
$n_\delta=\lfloor2(1-\varepsilon)\log(1/\delta)/\Delta^2\rfloor$.
For small enough $\delta$, $n_\delta\ge1$.
Under the true instance, let
$S_k=\sum_{i=1}^k(X_{q,m,i}-\nu)$, with $S_0=0$. The local
log-likelihood ratio is
\begin{equation*}
    Z_k=dS_k+\frac{d^2k}{2}.
\end{equation*}

The hard budget bounds the stopping time by $\lfloor B/c_{\min}\rfloor$
under both instances. We pad the transcript after stopping to this common
finite horizon. The policy's action, randomization, and output kernels
are identical under both instances, and observations of all other pairs
have identical laws. Thus the full transcript likelihood ratio is
$\exp(Z_N)$, where $N=N_{q,m}(\TS)$.

Let $A=\{q\in\widehat{\mathcal S}_\pi(B),\ N\le n_\delta\}$ and
$x_\delta=(1-\varepsilon/2)\log(1/\delta)$. Splitting at $Z_N=x_\delta$
gives
\begin{equation}
\begin{aligned}
    \mathbb P_{\mu,\pi}(A)
    &\le e^{x_\delta}\mathbb P_{\lambda,\pi}(A)
       +\mathbb P_{\mu,\pi}(A,Z_N>x_\delta)\\
    &\le\delta^{\varepsilon/2}
       +\mathbb P_\mu\!\left(
           \max_{0\le k\le n_\delta}Z_k>x_\delta
       \right).
\end{aligned}
\label{eq:app-track-change-measure-split}
\end{equation}
By Eq.~(\ref{eq:app-track-alt-choice}),
\begin{equation*}
    \frac{d^2n_\delta}{2}
    \le(1-\varepsilon)(1+\varepsilon/4)\log(1/\delta)
    \le(1-3\varepsilon/4)\log(1/\delta).
\end{equation*}
Consequently,
\[
    \mathbb P_\mu\!\left(\max_{k\le n_\delta}Z_k>x_\delta\right)
    \le\mathbb P_\mu\!\left(
        \max_{k\le n_\delta}S_k>
        \frac{\varepsilon\log(1/\delta)}{4d}
    \right).
\]
For a Gaussian random walk, stopping
$\exp(\theta S_k-\theta^2k/2)$ at its first crossing of $S_k>a$ before
$n$, and optimizing at $\theta=a/n$, gives
\[
    \mathbb P\!\left(\max_{0\le k\le n}S_k>a\right)
    \le e^{-a^2/(2n)}.
\]
Since $d^2n_\delta\le2\log(1/\delta)$, it follows that
\begin{equation*}
\begin{aligned}
    \mathbb P_\mu\!\left(\max_{k\le n_\delta}Z_k>x_\delta\right)
    &\le\exp\!\left(-\frac{\varepsilon^2\log^2(1/\delta)}
                              {32d^2n_\delta}\right)\\
    &\le\delta^{\varepsilon^2/64}.
\end{aligned}
\end{equation*}
Combining with Eq.~(\ref{eq:app-track-change-measure-split}) and using
$\delta^{\varepsilon/2}\le\delta^{\varepsilon^2/64}$ proves the bound
for $A$. Finally,
$\{N<2(1-\varepsilon)\log(1/\delta)/\Delta^2\}
\subseteq\{N\le n_\delta\}$.
All constants are independent of the policy and budget; finiteness of
the set of good pairs gives a common sufficiently small $\delta$.
\end{proof}

\begin{proof}[Proof of Theorem~\ref{thm:track-coverage-converse}]
Fix $b>0$ and $\varepsilon\in(0,1)$, let
$B_\delta=b\log(1/\delta)$, and
write $\bar k=\overline K_\varepsilon(b)$. For any
$\pi\in\Pi_\delta(B_\delta)$, let
\[
    \mathcal B_\pi
    =\{\widehat{\mathcal S}_\pi(B_\delta)\nsubseteq\mathcal G\}
\]
and
\[
    \mathcal R_\pi
    =\bigcup_{q\in\mathcal G,m\in[M]}
       \left\{q\in\widehat{\mathcal S}_\pi(B_\delta),\;
       N_{q,m}(\TS)<
       \frac{2(1-\varepsilon)}{\Delta_{q,m}^2}
       \log(1/\delta)\right\}.
\]
Reliability and Lemma~\ref{lem:track-necessary-evidence} imply that
\begin{equation*}
    \mathbb P_{\mu,\pi}(\mathcal B_\pi)\le\delta,
    \quad
    \mathbb P_{\mu,\pi}(\mathcal R_\pi)
    \le2|\mathcal G|M\delta^{\varepsilon^2/64}.
\end{equation*}
On the complement of these two events, all returned arms are good and
every required attribute has at least its stated number of observations.
If $k$ arms are returned, then $k\le|\mathcal G|$ and
\begin{equation*}
\begin{aligned}
    b\log(1/\delta)=B_\delta
    &\ge\sum_{q\in\widehat{\mathcal S}_\pi(B_\delta)}
         \sum_{m=1}^M c_mN_{q,m}(\TS)\\
    &\ge(1-\varepsilon)\log(1/\delta)
         \sum_{q\in\widehat{\mathcal S}_\pi(B_\delta)}h_q\\
    &\ge(1-\varepsilon)T_k^*(\mu)\log(1/\delta).
\end{aligned}
\end{equation*}
After division by $\log(1/\delta)$, this says
$(1-\varepsilon)T_k^*(\mu)\le b$, hence $k\le\bar k$ by definition.
The argument also covers $k=0$ since $T_0^*=0$. If $\bar k=|\mathcal G|$, any output
larger than $\bar k$ already contains a bad arm. Therefore
\begin{equation*}
\begin{aligned}
&\sup_{\pi\in\Pi_\delta(B_\delta)}
\mathbb P_{\mu,\pi}\!\left(
    |\widehat{\mathcal S}_\pi(B_\delta)|>
    \overline K_\varepsilon(b)
\right)\\
&\quad\le\delta+2|\mathcal G|M\delta^{\varepsilon^2/64}\longrightarrow0.
\end{aligned}
\end{equation*}
For $|\mathcal G|=0$, the same conclusion follows directly from reliability.
\end{proof}

\end{document}

%% file: math_commands.tex
\usepackage{amsmath,amsfonts,bm}

\def\eqref#1{equation~\ref{#1}}
\def\1{\bm{1}}

\DeclareMathAlphabet{\mathsfit}{\encodingdefault}{\sfdefault}{m}{sl}
\SetMathAlphabet{\mathsfit}{bold}{\encodingdefault}{\sfdefault}{bx}{n}